\theoremstyle{plain}
\newtheorem{theorem}{Theorem}[section]
\newtheorem{proposition}[theorem]{Proposition}
\newtheorem{lemma}[theorem]{Lemma}
\theoremstyle{definition}
\newtheorem{definition}[theorem]{Definition}
\newtheorem{remark}[theorem]{Remark}
\newcommand{\TAU}{{\color[rgb]{0.0,0.0,0.0}\LARGE\tau}}
\newcommand{\R}{{\color[rgb]{0.0,0.0,0.0}R}}
\newcommand{\Rmin}{{\color[rgb]{0.0,0.0,0.0}R^*_{\color[rgb]{0.0,0.0,0.0}\kappa}}}
\newcommand{\f}{{\color[rgb]{0.0,0.0,0.0}\kappa}}
\newcommand{\V}{{\color[rgb]{0.0,0.0,0.0}V}}
\begin{document}

% If your paper is accepted and the title of your paper is very long,
% the style will print as headings an error message. Use the following
% command to supply a shorter title of your paper so that it can be
% used as headings.
%
%\runningtitle{I use this title instead because the last one was very long}

% If your paper is accepted and the number of authors is large, the
% style will print as headings an error message. Use the following
% command to supply a shorter version of the author names so that
% they can be used as headings (for example, use only the surnames)
%
%\runningauthor{Surname 1, Surname 2, Surname 3, ...., Surname n}

\twocolumn[

\aistatstitle{%Consequentialist Evaluation of Binary Classifiers: Theory, Practice, and Tools} 
A Consequentialist Critique of Binary Classification Evaluation: Theory, Practice, and Tools}

\aistatsauthor{
  Gerardo Flores
  \And
  Abigail Schiff
  \And
  Alyssa H. Smith
  \And
  Julia A. Fukuyama
  \And
  Ashia C. Wilson
}

\aistatsaddress{
  MIT
  \And
  Brigham \& Women's Hospital
  \And
  Northeastern
  \And
  Indiana U.
  \And
  MIT
}]

\begin{abstract}
Machine learning-supported decisions, such as ordering diagnostic tests or determining preventive custody, often require converting probabilistic forecasts into binary classifications. We adopt a consequentialist perspective from decision theory to argue that evaluation methods should prioritize forecast quality across thresholds and base rates. This motivates the use of proper scoring rules such as the Brier score and log loss. However, our empirical review of practices at major ML venues (ICML, FAccT, CHIL) reveals a dominant reliance on top-$K$ metrics or fixed-threshold evaluations.
To bridge this disconnect, we introduce a decision-theoretic framework that maps evaluation metrics to their appropriate use cases, accompanied by a practical Python package, \texttt{briertools}, which lowers the barrier to applying proper scoring rules in practice. Methodologically, we derive and implement a clipped Brier score variant that avoids full integration and better reflects bounded, interpretable threshold ranges. Theoretically, we reconcile the Brier score with decision curve analysis, directly addressing the critique of \citet{vickers17} regarding the clinical utility of proper scoring rules.
\end{abstract}

%!TEX root=../main.tex
%
\section{INTRODUCTION}
We study a setting in which a binary classifier $\f(\cdot\,; \TAU) \hspace{-.2em}: \hspace{-.1em} \mathcal{X} \hspace{-.2em} \rightarrow \hspace{-.2em}  \{0, 1\}$ is developed to map an input $x \in \mathcal{X}$ to a binary decision.
Such classifiers are foundational to decision-making tasks across domains, from healthcare to criminal justice, where outcomes depend on accurate binary choices.
The decision is typically made by comparing a score $s(x) \in \mathbb{R}$, such as a probability or a logit, to a threshold $\TAU \in \mathbb{R}$:
$$
\f(x; \TAU) = \begin{cases}
1 & \text{if } s(x) \geq \TAU \\
0 & \text{if } s(x) < \TAU.
\end{cases}
$$
The threshold $\TAU$ is a parameter that can be adjusted to control the tradeoff between false positives and false negatives, reflecting the specific priorities or constraints of a given application.
For example, consider a scenario in which a classifier is used to make (a) judicial decisions, such as who to sentence, or (b) medical decisions, such as recommending treatments for diagnosed conditions.
Which threshold should be chosen and how should the resulting classifiers be evaluated?
We advocate for a {\em consequentialist view} of classifier evaluation:  pragmatic classifier evaluation should model the real-world impacts of using that classifier to make concrete decisions.

To this end, we introduce a {\em value} function, $\V(\f(x; \TAU), y)$, which assigns a value to each prediction given the true label $y$ and the classifier's decision $\f(x; \TAU)$.
The overall performance of a classifier is given by its expected value over a distribution $(\mathcal{D}$: ${\mathbb{E}}_{(x,y)\sim \mathcal{D}} [ \V(\f(x;\TAU),y)].)$

We identify two key factors that determine appropriate evaluation approaches:
\begin{enumerate}
  \item \textbf{Instance Coupling.} whether decision costs are additive, allowing independent decisions across instances (Indep.), or there is a fixed quota to act on positive predictions (Top-K decisions); and
  \item \textbf{Threshold Specificity.} whether the decision threshold is known exactly or approximately at model selection time (regardless of what will later be known at deployment time).
\end{enumerate}
Table~\ref{tab:metric_taxonomy} illustrates how different evaluation metrics align with these settings. We develop a taxonomy for the decision problems for which classifiers are typically used, and we describe which commonly used metrics represent a decision-theoretic evaluation of those problems in potentially uncertain decision contexts. This taxonomy makes it simple to choose metrics that are consistent with deployment scenarios.
\begin{table}[ht]
  \centering
  \renewcommand{\arraystretch}{1.1}
  \setlength{\arrayrulewidth}{0.6pt}
  \setlength{\tabcolsep}{4pt}
  \begin{tabular}{c@{\hspace{4pt}}cc|>{\columncolor[HTML]{e6f2ff}}c|>{\columncolor[HTML]{f7fbff}}c}
    & & \multicolumn{1}{c}{} & \multicolumn{2}{c}{\textbf{Instance Coupling}} \\
    & & \multicolumn{1}{c}{} & \textbf{Independent} & \textbf{Top-K} \\
    \noalign{\global\cmidrulewidth=0.5pt \global\aboverulesep=0pt \global\belowrulesep=0pt}\cmidrule{4-5}\noalign{\global\cmidrulewidth=\lightrulewidth \global\aboverulesep=0.4ex \global\belowrulesep=0.65ex}
    \multirow{2}{*}[+0.6em]{\rotatebox{90}{\textbf{Threshold}}}
    & \multirow{2}{*}[+0.6em]{\rotatebox{90}{\textbf{Specificity}}}
    & \cellcolor[HTML]{e6f2ff} \textbf{Mixed $\tau$}
    & \cellcolor[HTML]{b3d8f2}$\begin{array}{c} \text{Brier} \\ \text{Log Loss}\end{array}$
    & \cellcolor[HTML]{e6f2ff}$\begin{array}{c} \text{AUC-ROC} \\ \text{AUC-PR} \end{array}$\\
    \noalign{\global\cmidrulewidth=0.5pt \global\aboverulesep=0pt \global\belowrulesep=0pt}\cmidrule{3-5}\noalign{\global\cmidrulewidth=\lightrulewidth \global\aboverulesep=0.4ex \global\belowrulesep=0.65ex}
    & & \cellcolor[HTML]{f7fbff} \textbf{Fixed $\tau$}
    & \cellcolor[HTML]{e6f2ff}$\begin{array}{c} \text{Net Benefit} \\ \text{Accuracy} \end{array}$
    & \cellcolor[HTML]{ffffff}$\begin{array}{c} \text{Precision@K} \\ \text{Recall@K} \end{array}$ \\
  \end{tabular}
  \caption{\small\textbf{Metric Selection Recommendations.}
First, identify whether decisions can be made independently or there is a fixed budget of positive labels (column).
Then, identify whether the threshold is uncertain or precisely known (row).
The dark blue cell is the most common decision context; the light blue cells' metrics are frequently used despite worse metric-context alignment (note especially Accuracy, an unrealistic special case of Net Benefit that is the most common metric overall).  The white cell is included for completeness.
}
  \label{tab:metric_taxonomy}
\end{table}

Despite pervasive threshold uncertainty in real-world ML applications, such as healthcare and the criminal legal system, evaluation practices remain misaligned with deployment realities. Evaluations based on independent decisions typically assume fixed thresholds, while evaluations that are based on threshold mixtures typically assume dependent decisions.
Our analysis of three major ML conferences (ICML, FAccT, CHIL) reveals a consistent preference for metrics designed for fixed or top-K decisions, which do not match common deployment settings where thresholds are uncertain and decisions are independent.

\subsection{Contributions}

\paragraph{Core theoretical contributions.}

Our main new result is a \textbf{bounded-threshold extension of proper scoring rules}.
For independent decisions with uncertainty over a bounded interval of cost ratios, we derive bounded versions of the Brier score and log loss that average regret only over that interval rather than over the full unit interval.
This fills the gap between point-threshold metrics such as net benefit and full-interval proper scoring rules.

Our second contribution is a \textbf{reconciliation of proper scoring rules with Decision Curve Analysis} in a common regret-based framework, which clarifies when each applies and shows how bounded-threshold scores address the critique of \citet{vickers17} that standard Brier evaluation averages over implausible thresholds.

As an appendix-level result, we also derive an interpretation of Concordant Partial AUC in the same framework, showing it implicitly averages over a score-induced threshold distribution.

\paragraph{Practice-facing contributions.}
We provide a decision-context taxonomy, supported by an LLM-assisted survey of current evaluation practice, for matching binary-classification metrics to deployment settings.
To support this use in practice, we release \texttt{briertools}, which implements the proposed metrics and associated visualizations, and we illustrate the framework on a breast-cancer treatment case study.

%!TEX root=../main.tex
%
\section{Preliminaries: Current Practice and Established Theory}
This section introduces a consequentialist approach that matches metrics to implicit decision problems.
We use this approach to summarize known results about optimal thresholding under calibration, accuracy as a special case of net benefit, regret, and the structure of proper scoring rules.
We then survey the usage of these metric families in the literature.

\subsection{Related work}
\paragraph{Dependent Decisions.}
ROC/AUC was first adopted in psychophysics and radiology; in ML, it evaluates ranking quality under varying thresholds. See Appendix \ref{apdx:relatedwork} for more details.
There have been consistent critiques of AUC's lack of calibration information \cite{calibration19, aclu23}.
\citet{hand09} and \citet{brieraucrank12} showed that AUC-ROC can be interpreted as a cost-weighted average regret, particularly under calibrated or quantile-based forecasts.
Several attempts have been made to evaluate area under only portions of the ROC curve \citep{mcclish89, balancedaccuracy23}, but these have not produced clear decision-theoretic interpretations.

\vspace{-1.0em}
\paragraph{Independent Decisions.}
\citet{vickers06, vickers08} and \citet{vickers17} introduced decision curve analysis (DCA) as a threshold-restricted net benefit visualization, arguing it offers greater clinical relevance than Brier-based aggregation across all thresholds.
Recent work has examined the decomposition of Brier score and log loss into calibration and discrimination components \citep{shen05, siegert17, tryptich24}, providing implementation and visualization guidance.

\citet{shuford66} first characterize proper scoring rules by an integral formulation, and \citet{savage71} characterize the variable of integration as the rate of substitution.  \citet{schervish89} realizes this is an integral over binary decision problems at different cost ratios, and \citet{shen05} interprets the weight as describing which thresholds are important to the evaluator.  \citet{proper07} then connects this integral representation to convexity properties of proper scoring rules and extends it beyond the binary case.

\citet{brieraucrank12} then studies the behavior of this integral under different threshold decision rules.
\citet{pfohl22} conclude overall net benefit is maximized with empirical risk minimization and recalibration, and that picking optimal thresholds is equivalent to this post-hoc recalibration.

\vspace{-1.0em}
\paragraph{Surrogate Loss Consistency}
The requirement that evaluation metrics align with the downstream decision problem is conceptually related to the notion of {\em surrogate loss consistency} in statistical learning. In that setting, the question is whether the surrogate loss used for training leads to a model that is optimal for the true target loss. This principle was introduced in inference by \citet{fisher22} and developed in a modern machine learning context by \citet{bartlett06} and \citet{zhang04}. The focus there, however, is on the choice of training loss and its consistency with the parameter being estimated. Our setting is different: we are concerned not with training-time surrogate losses, but with deployment-time evaluation metrics. Uncertainty in our case arises from decision contexts and thresholds, rather than from prediction values or parameter estimation.
%

%!TEX root=../main.tex
%
\subsection{Common Metric Choices}
\label{subsec:motivating_experiment}
We analyze evaluation metrics used in papers from ICML 2024, FAccT 2024, and CHIL 2024 using an LLM-assisted review (see Appendix~\ref{apdx:llm} for more details on our analysis).
Accuracy was the most common metric at ICML and FAccT ($>50\%$), followed by AUC-ROC; CHIL favored AUC-ROC, with AUC-PR also notable.
Proper scoring rules (e.g. Brier score, log loss) were rarely used ($<15\%$ and $<5\%$, respectively).
These findings, summarized in Figure~\ref{fig:litreview}, confirm the dominance of accuracy and AUC-ROC in practice.
This paper addresses this gap by clarifying when Brier scores and log losses are appropriate; further, it provides theoretical justification and tools to support their adoption.
\begin{figure}[ht!]
\begin{center}
\includegraphics[width=\columnwidth]{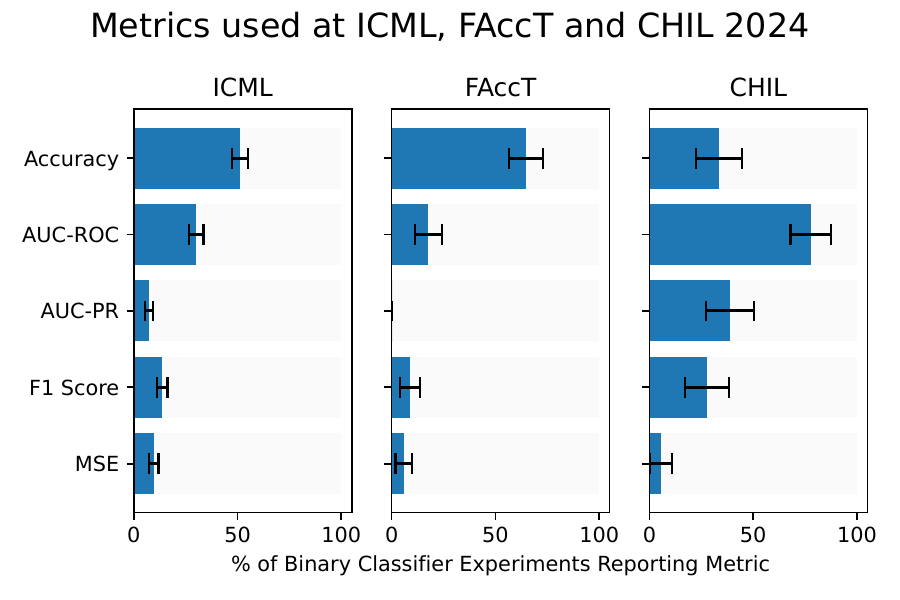}
\caption{
\small Claude 3.5 Haiku was used to analyze 2,610 papers from three major 2024 conferences.
Each plot summarizes the evaluation metrics used for binary classifiers.
Accuracy dominates outside healthcare, while AUC-ROC is more prevalent within healthcare domains.
Error bars come from binomial confidence intervals.
}
\label{fig:litreview}
\end{center}
\vskip -0.2in
\end{figure}
%!TEX root=../main.tex
%

\subsection{Consequentialist Formalism}
Our goal is to evaluate binary classifiers in a way that directly reflects their real-world decision consequences.
A natural abstraction is {\bf expected regret}, which we define as \emph{the excess cost of using a model compared to the best possible decision rule under a given cost structure}.

\paragraph{Cost model} We adopt the cost model introduced by \citet{angstrom22}, where perfect prediction defines a zero-cost baseline, predicted positives incur an immediate cost $C$, and false negatives incur a downstream loss $L$.
Without loss of generality, we normalize $L = 1$ and define the relative cost as $c = C/L$.
See Appendix~\ref{apdx:angstrom} for more details about these choices and their implications.

\begin{table}[ht]
    \centering
    \begin{tabular}{c|cc}
        $\V(y, a)$ & $a = 0$ & $a = 1$ \\
        \hline
        $y = 0$ & 0 \textcolor{lightgray}{(True Neg)} & $c$ \textcolor{lightgray}{(False Pos)} \\
        $y = 1$ & $1 - c$ \textcolor{lightgray}{(False Neg)} & 0 \textcolor{lightgray}{(True Pos)}
    \end{tabular}
\end{table}
This structure highlights that all evaluation reduces to how well a classifier navigates the cost tradeoff encoded by $c$.

\paragraph{Regret} Let $\pi = P(y=1)$ denote prevalence, $F_0(\tau)$ the CDF of negative scores, and $F_1(\tau)$ the CDF of positive scores. We define regret as
\[\begin{aligned}
    \R(\f,\pi,&c,\TAU,\mathcal{D}) = \underset{(x,y)\sim \mathcal{D}}{\mathbb{E}} \Bigl[ \V(\f(x;\TAU),y)\Bigr]\\
  &= c \cdot (1-\pi) \cdot (1-F_0(\TAU))
    + (1-c) \cdot \pi \cdot F_1(\TAU).
\end{aligned}\]
This quantifies the expected loss of a classifier at threshold $\tau$ relative to the idealized zero-cost baseline \citep{brieraucrank12}.

\begin{theorem}[Optimal Threshold]
\label{thm:optimal_threshold_body}
\label{thm:opt_threshold}
Given a calibrated model, the optimal threshold is the cost:
\[
\arg\min_{\TAU} \R(\f,\pi,c,\TAU,\mathcal{D}) = c.
\]
\end{theorem}
See Appendix \ref{thm:optimal_threshold} for a brief proof; see also \citet{brieraucrank12}.
\paragraph{Assumptions}
To isolate the role of {\bf cost} in shaping evaluation, we adopt three simplifying assumptions:
\begin{itemize}
    \item {\bf Fixed prevalence} ($\pi$ does not shift between training and deployment). This removes confounding from label shift and allows us to focus purely on cost-driven evaluation.
    \item {\bf Score as probability}
    The decision-maker treats the model's output $s(x)$ as a probability, that is, acts as though $P(Y=1 | s(X)=p) = p$.  This allows us to model the decision-maker's threshold-setting behavior.
    \item {\bf Optimal thresholding}
    ($\tau^\ast = c$ is always chosen).
    This assumes the decision-maker makes the optimal choice given their beliefs and the cost ratio.
    This lets us express regret in its minimal form:
     \begin{align*}
  \Rmin(c) &  =  c\cdot(1-\pi)\cdot(1-F_0(c))
  + (1-c)\cdot  \pi\cdot  F_1(c).
        \end{align*}
\end{itemize}
These assumptions restrict attention to the core decision problem: how well does a classifier balance false positives and false negatives given a cost ratio?
By abstracting away dataset shift and thresholding complications, we can evaluate metrics solely on their ability to reflect cost-sensitive decision quality.

The second assumption is about the behavior of the decision-maker, not the match between the model and the real world (see the concept of small worlds in \cite{savage54}).
If in fact the model is poorly calibrated, \citet{brieraucrank12} show theoretically that picking an optimal threshold is tantamount to recalibrating the model and then using the formula from the third assumption.  \citet{pfohl22} verify this empirically.

In what follows, we show that many standard metrics, like accuracy and AUC-ROC, can be reinterpreted as expectations of regret over particular cost distributions.
We clarify which metrics align with which kinds of decision problems.

\subsection{Independent Decisions with Fixed Thresholds: a Consequentialist View of Accuracy}

Accuracy is the most commonly used metric for evaluating binary classifiers, as it offers a simple measure of correctness that remains the default in many settings \cite{rocaccuracy05}. Formally:

\begin{definition}[Accuracy]
Given data $\{(x_i, y_i)\}_{i=1}^n$ with $y_i \in \{0,1\}$, and a binary classifier $\f(x; \TAU)$ thresholded at $\TAU$, accuracy is defined as:
\[
\text{Accuracy}(\f, \mathcal{D}) \triangleq \tfrac{1}{n} \sum_{i=1}^n \mathbb{I}(\f(x_i; \TAU) = y_i).
\]
\end{definition}
\vspace{-1em}
Accuracy corresponds to regret minimization when misclassification costs are equal:

\begin{proposition}
\label{prop:accuracy_regret}
Let $\TAU$ denote a (possibly suboptimal) threshold. Then,
\[
\text{Accuracy}(\f, \mathcal{D}) = 1 - 2 \cdot \R(\f, \pi, c = 1/2, \TAU, \mathcal{D}).
\]
\end{proposition}

This equivalence, proved in Appendix~\ref{thm:accuracy_regret} and established by \citet{brieraucrank12}, underscores a key limitation: accuracy implicitly assumes all errors carry equal cost. In many domains, this is neither justified nor appropriate. In criminal sentencing, optimizing for accuracy equates wrongful imprisonment with wrongful release, an assumption at odds with most legal and ethical frameworks.
In prostate cancer screening, false negatives may lead to death, while false positives can result in unnecessary treatment that causes erectile dysfunction.
It is natural to model the cost of death (normalized to 1) as equal across all patients.
However, the implied cost ratio $c=1/2$ (as if erectile dysfunction were half as bad as death) both oversimplifies outcomes and disregards heterogeneous patient preferences.
Accuracy is therefore only meaningful when error costs are balanced, prevalence is stable, and tradeoffs are collectively agreed upon.
These conditions are rarely satisfied in practice.

%!TEX root=../main.tex
\subsection{Dependent Decisions (top K) with Mixtures of Thresholds: A Consequentialist View of AUC-ROC}
In resource-constrained settings, such as allocating ICU beds, allocation decisions are coupled by the fixed budget of K predicted positives.
When $K$ is known exactly at model selection time, Net Benefit@$K$ is the theoretically appropriate metric; however, we show in Appendix~\ref{apdx:fixed_k} that it induces the same model ranking as the more popular Precision@$K$ or Recall@$K$ at fixed $N$, $K$, and $c$.

When $K$ is not known at model selection time or varies across deployments, the decision maker faces a mixture over possible thresholds.
AUC-ROC reflects this mixture.
When scores are calibrated, AUC admits a regret interpretation due to \citet{hand09}:
\[
\mathrm{AUC\text{-}ROC}(\f)
\;=\;
1 - \frac{1}{2\pi(1-\pi)}\,
\mathbb{E}_{(x,y)\sim\mathcal{D}}\!\left[\Rmin\big(s(x)\big)\right].
\]
That is, it averages $\TAU^\ast$-regret using the \emph{model’s score distribution} as the threshold prior (derivation in Thm.~\ref{thm:hand} in Ap.~\ref{apdx:auc}).

The limitation is that model scores are trained to estimate associations with outcomes rather than the costs of decisions. As \citet{hand09} notes, this means the model itself determines the relative importance of false positives and false negatives, effectively assigning how costly it is to miss a cancer diagnosis or how acceptable it is to release a guilty person.

%!TEX root=../main.tex
%
\subsection{Independent Decisions with Mixtures of Thresholds: a Consequentialist View of Brier Scores}
While accuracy is widely used as an evaluation metric, it is rarely directly optimized.
Instead, practitioners generally use surrogate losses such as squared error and log loss (also known as cross-entropy), largely based on their differentiability.

However, these functions can also be used after training to evaluate the performance of a classifier on a given dataset.
This represents a use case that is distinctly different from surrogate loss minimization.
Decades of research in the forecasting community have demonstrated that these loss functions have a deeper interpretation: they represent distinct notions of average regret, each corresponding to different assumptions about uncertainty and decision-making (see \ref{thm:brier} and \ref{thm:ll}, which are derived in \cite{shuford66}, with interpretation clarified in \cite{schervish89}).
From a consequentialist perspective, these tractable, familiar methods are not being used to their full potential as evaluation measures.

\begin{theorem}[Brier Score as Uniform Mixture of Regret]
\label{thm:brier}
Let \(\f : \mathcal{X} \to [0,1]\) be a probabilistic classifier with score function \(s(x)\), and let \(\mathcal{D}\) be a distribution over  \((x, y) \in \mathcal{X} \times \{0,1\}\). Then the Brier score of \(\f\) is  the mean squared error between the predicted probabilities and true labels:
\[
\text{\em BS}(\f, \mathcal{D}) \triangleq \mathbb{E}_{(x,y) \sim \mathcal{D}} \left[ (y - s(x))^2 \right].
\]
Moreover, this is equivalent to the expected minimum regret over all cost ratios \(c \in [0,1]\), where regret is computed with optimal thresholding:
\[
\text{\em BS}(\f, \mathcal{D}) = \mathbb{E}_{c \sim \text{Uniform}[0,1]} \left[ \Rmin(c) \right].
\]
\end{theorem}
The result is known from \citet{shuford66}, but a more general proof appears in Appendix~\ref{apdx:bounded_logloss}, where this unbounded version arises as a special case.
The following Theorem~\ref{thm:ll} establishes that unlike the Brier score, which weights regret uniformly across thresholds, log loss emphasizes extreme cost ratios via the weight $\frac{1}{c(1 - c)}$.

\begin{theorem}[Log Loss as a Weighted Average of Regret]
\label{thm:ll}
Let $\f: \mathcal{X} \to [0,1]$ be a probabilistic classifier with score $s(x)$, and let $\mathcal{D}$ be a distribution over $(x, y) \in \mathcal{X} \times \{0,1\}$. Then:
\begin{align*}
    \text{\em LL}(\f, \mathcal{D}) &= \mathbb{E}_{(x,y)\sim \mathcal{D}} \left[ -\log \left( s(x)^y (1 - s(x))^{1 - y} \right) \right]\\& =
\int_0^1 \frac{\Rmin(c)}{c(1 - c)} \, dc = \int_{-\infty}^{\infty} \hspace{-.5em} \Rmin\left( \frac{1}{1 + e^{-\ell}} \right) d\ell.
\end{align*}

\end{theorem}

Like the Brier score, log loss integrates regret uniformly over log-odds of cost ratios, assigning more weight to rare but high-consequence decisions.
This makes log loss more sensitive to tail risks, which may be desirable when one type of error carries disproportionate cost.
See Figure~\ref{fig:prior_weights} in Appendix~\ref{apdx:threshold_distributions} for a visualization.

In practice, even though proper scoring rules such as log loss are widely used during training, final model selection often reverts to fixed-threshold metrics.
To address this, we introduce bounded-threshold metrics that restrict evaluation to a plausible range of costs without requiring exact specification at the time of model selection.
%!TEX root=../main.tex
%
\section{Novel Results: Bounded Threshold Scoring Rules and Their Relation to DCA}
Must scoring rules represent regret averaged across all possible thresholds, or can they incorporate expert-level judgment about relevant decision contexts directly?
To investigate this question, we analyze Decision Curve Analysis (DCA) \citep{vickers06,vickers19dca}, a decision-theoretic approach from medical informatics that formed the basis of a prominent critique of the Brier score.
\citet{vickers17} argued that averaging over the full threshold range assigns weight to cost ratios no clinician would consider plausible.

We show that DCA can itself be understood as a regret-based evaluation approach, in the same family as the Brier score and log loss.
While the specific formulation of DCA makes it ill-suited to averaging across thresholds, the critique highlights the importance of restricting evaluation to clinically relevant cost ratios.
Accordingly, we introduce bounded-threshold versions of the Brier score and log loss that \emph{do} support averaging regret over restricted intervals of decision-context uncertainty.

\subsection{Decision Curve Analysis}

DCA is built around the notion of {\em net benefit}, the quantity plotted against the threshold in DCA, which we show to be equivalent to regret up to constant terms.

\begin{definition}[Net Benefit (DCA)]
As defined by \citet{vickers19dca}, the net benefit at decision threshold
$\TAU \in (0,1)$ is given by:
\[
  \text{ NB} (\TAU) = (1-F_1(\TAU)) \cdot \pi - (1-F_0(\TAU)) \cdot (1 - \pi) \cdot \tfrac{\TAU}{1-\TAU}.
\]
\end{definition}

\begin{theorem}[Net Benefit as a function of regret]
\label{thm:net_benefit_h}
\label{thm:nb_regret}
Let $\pi$ denote the prevalence of the positive class, which is the maximum achievable benefit under perfect classification. The net benefit at threshold $c$ is related to the regret as follows:
\[
  \text{\em NB}(c) = \pi - \tfrac{1}{1-c}\Rmin(c).
\]
\end{theorem}

\citet{vickers17} argue that the Brier score is inadequate for clinical settings where only a narrow range of decision thresholds is relevant (e.g. determining the need for a lymph node biopsy).
Comparing the unrestricted Brier score to net benefit at fixed thresholds (e.g., 5\%, 10\%, 20\%), they conclude that net benefit better captures clinical priorities.
However, once Brier score is understood as a weighted average of an affine transformation of net benefit across cost ratios, this critique elucidates a useful insight: the appropriate comparison for a fixed-threshold method like DCA is not to average over the full range of costs, as the Brier Score does, but to average only over the relevant interval (e.g., [5\%, 20\%]).  We now turn to this problem.

%!TEX root=../main.tex
%
\subsection{Regret over a Bounded Range of Thresholds}
We derive a new and computationally efficient expression for expected regret when the cost ratio \(c\) is distributed uniformly over a bounded interval \([a, b] \subseteq [0, 1]\) by exploiting the duality between pointwise squared error and average regret.
This formulation not only improves numerical stability but also simplifies implementation, requiring only two evaluations of the Brier score under projection.

Selecting $[a,b]$ requires the same clinical reasoning already used in decision curve analysis: a practitioner identifies the cost ratios that are plausible for the population of interest by reasoning about what makes a threshold too low (intervention harms outweigh any realistic benefit) or too high (no patient would demand that level of certainty before acting) \citep{vickers16}. The uniform prior over this interval is then the least structured choice consistent with those bounds.  Prior approaches that require a full distributional specification, such as the Beta priors of \citet{hand09} and \citet{recentbeta24}, demand dispersion parameters that practitioners cannot easily elicit or justify. See Appendix~\ref{apdx:threshold_distributions} for a detailed comparison.

Throughout, we will use notation  \(\text{clip}_{[a,b]}(z) \triangleq \max(a, \min(b, z))\) to denote the projection of $z$ onto the interval $[a,b]$.

\begin{theorem}[Bounded Threshold Brier Score]
\label{thm:bounded_brier}
For a classifier \(\f\), the average minimal regret over cost ratios \(c \sim \text{Uniform}(a,b)\) is given by:
\begin{align*}
\underset{c \sim \text{Unif}(a,b)}{\mathbb{E}} &\Rmin(c)
=
\frac{1}{b - a} \biggl[
\underset{(x, y) \sim \mathcal{D}}{\mathbb{E}} \left( y - \text{\em clip}_{[a,b]}(s(x)) \right)^2\\
&-
\underset{(x, y) \sim \mathcal{D}}{\mathbb{E}} \left( y - \text{\em clip}_{[a,b]}(y) \right)^2\biggr].
\end{align*}
\end{theorem}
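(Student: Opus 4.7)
The plan is to interchange the order of integration between $c \in [a,b]$ and the data expectation, reducing $\int_a^b \Rmin(c)\,dc$ to an integral of pointwise regret per data point that then evaluates, via a clean case analysis, to a clipped squared-error expression. Under the calibration hypothesis invoked in the definition of $\Rmin$, the optimal threshold is $\TAU = c$ (Theorem \ref{thm:optimal_threshold_body}), so $\Rmin(c)$ equals the data expectation of a per-example loss $\ell_c(x,y)$ equal to $(1-c)\mathbb{I}[s(x) < c]$ when $y = 1$ (false negative) and $c \cdot \mathbb{I}[s(x) \geq c]$ when $y = 0$ (false positive). Applying Fubini gives
\[
\int_a^b \Rmin(c)\,dc \;=\; \mathbb{E}_{(x,y)\sim\mathcal{D}}\left[\int_a^b \ell_c(x,y)\,dc\right].
\]

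Next, I would evaluate the inner integral by case analysis on $y$ and on the position of $s(x)$ relative to $[a,b]$. For $y = 1$ the integrand is nonzero only when $c > s(x)$, so the integral equals $\int_{\max(a,\,s(x))}^b (1-c)\,dc$; in each of the three positional sub-cases ($s(x) < a$, $a \leq s(x) \leq b$, $s(x) > b$) the result can be written uniformly as $\tfrac{1}{2}\bigl[(1 - \text{clip}_{[a,b]}(s(x)))^2 - (1-b)^2\bigr]$, where the third sub-case collapses to zero because $\text{clip}_{[a,b]}(s(x)) = b$ there. The $y = 0$ case is symmetric and yields $\tfrac{1}{2}\bigl[\text{clip}_{[a,b]}(s(x))^2 - a^2\bigr]$.

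I would then combine the two label cases. Since $y \in \{0,1\}$ linearly selects between the two expressions, the variable and boundary terms assemble respectively into $(y - \text{clip}_{[a,b]}(s(x)))^2$ and $(y - \text{clip}_{[a,b]}(y))^2$, using that $\text{clip}_{[a,b]}(y)$ equals $a$ when $y = 0$ and $b$ when $y = 1$ (assuming $0 \leq a \leq b \leq 1$). Taking data expectations and dividing by $b - a$ then delivers the claimed identity, in the same proportionality convention used in Theorem \ref{thm:brier}; indeed the special case $[a,b] = [0,1]$ collapses the subtracted term to zero since $\text{clip}_{[0,1]}(y) = y$, recovering Theorem \ref{thm:brier} as a sanity check.

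The main obstacle is the recognition in the combination step: spotting that the seemingly ad hoc boundary constants $(1-b)^2$ from the $y=1$ case and $a^2$ from the $y=0$ case reassemble precisely into $(y - \text{clip}_{[a,b]}(y))^2$. Without this identification, the natural output of the case analysis is a longer piecewise expression rather than the elegant difference of two clipped squared errors. The clip function is doing double duty here, both unifying the three positional sub-cases of $s(x)$ and encoding the label-dependent boundary constants, which is what makes the final formula as compact as it is.
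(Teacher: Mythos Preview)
Your proof is correct and rests on the same core move as the paper's---swapping the order of integration between $c$ and the data expectation---but the organization differs. The paper first proves general lemmas (Lemmas~\ref{lem:pos_brier}--\ref{lem:combine_classes}) valid for an arbitrary pointwise loss $L$, showing that the clipped-loss difference equals an integral against $L'$, and then instantiates $L(x)=x^2$ in Theorem~\ref{apdx:bounded_brier}; you instead compute the quadratic case directly via a case analysis on the position of $s(x)$ relative to $[a,b]$, which is more elementary and self-contained. The paper's abstraction buys reuse: the same lemmas immediately yield the bounded Log Loss (Theorem~\ref{thm:bounded_logloss}) and bounded Net Benefit (Theorem~\ref{thm:bounded_dca}) results by substituting different $L$, whereas your direct route would require repeating the case analysis for each new loss. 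Your flag about the ``proportionality convention'' is apt: the appendix proof also terminates at $2\,\mathbb{E}_{c}[\Rmin(c)]$, so the factor of two you encounter is shared with the paper rather than an error on your part.
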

The result follows as a direct extension of the proof of Theorem \ref{thm:brier}.
Specifically, the same argument structure applies with the necessary modifications that account for the additional constraints introduced in this setting.
For a complete derivation, refer to the proof of Theorem \ref{apdx:bounded_brier} in the Appendix, where the argument is presented in full detail, including Proposition \ref{prop:propriety} which shows that the clipped version is proper but not strictly proper.

Overall, this expression offers two practical advantages.
First, it is computationally efficient, requiring only 2 Brier score evaluations, one on predictions and one on labels, after projecting onto \([a, b]\).
Second, it is interpretable, recovering the standard Brier score when \(a = 0\) and \(b = 1\), consistent with the assumption that true labels lie in \(\{0,1\}\).

\begin{theorem}[Bounded Threshold Log Loss]
\label{thm:bounded_logloss}
Let $\f$ be a probabilistic classifier with score function $s(x)$. Let $c = \frac{1}{1 + \exp(-\ell)}$ denote the cost ratio corresponding to log-odds $\ell$, and suppose $\ell$ is distributed uniformly over the interval \([\log\frac{a}{1-a},\, \log\frac{b}{1-b}]\), where \(0 < a < b < 1\). Then the expected regret over this range is given by:
\[\begin{aligned}
  &{\mathbb{E}}_{\ell\sim\text{Uniform}\bigl(
    \log\frac{a}{1-a},\; \log\frac{b}{1-b}
    \bigr)}\left[\Rmin\Big(\frac{1}{1+\exp -\ell}\Big)\right]
  \\&= \tfrac{1}{\log\frac{b}{1-b}-\log\frac{a}{1-a}}\biggl(
    \underset{(x,y)\sim\mathcal{D}}{\mathbb{E}}[\log(1-|y- \text{\em clip}_{[a,b]}(s(x))|)]
    \\&\qquad - \hspace{-.9em} \underset{(x,y)\sim\mathcal{D}}{\mathbb{E}}[\log(1-|y- \text{\em clip}_{[a,b]}(y)|)]
  \biggr).
\end{aligned}\]
\end{theorem}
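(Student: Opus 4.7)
The plan is to parallel the proof of the bounded Brier Score (Theorem \ref{thm:bounded_brier}), but with the uniform measure on log-odds replacing the uniform measure on cost ratios. The core identity is the Schervish/Savage decomposition of pointwise predictive losses as weighted threshold integrals of regret, specialized here to the weight $1/(c(1-c))$ that underlies Log Loss (Theorem \ref{thm:ll}).

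The first step is a change of variable. Because $\ell = \log(c/(1-c))$ has Jacobian $d\ell/dc = 1/(c(1-c))$, the uniform expectation over $\ell$ on $[\log\tfrac{a}{1-a},\log\tfrac{b}{1-b}]$ becomes
\[
\frac{1}{\log\tfrac{b}{1-b}-\log\tfrac{a}{1-a}}\int_a^b \frac{\Rmin(c)}{c(1-c)}\,dc.
\]
Substituting the definition $\Rmin(c)=c(1-\pi)(1-F_0(c)) + (1-c)\pi F_1(c)$, the integrand collapses to $(1-\pi)(1-F_0(c))/(1-c) + \pi F_1(c)/c$. The second step is to apply Fubini to swap the integral over $c$ with the expectation over samples, interpreting $F_0$ and $F_1$ as expectations of indicators. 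Pointwise, a positive sample ($y=1$) contributes an integral of $1/c$ over the set $\{c\in[a,b]:c>s(x)\}$, and a negative sample ($y=0$) contributes an integral of $1/(1-c)$ over $\{c\in[a,b]:c<s(x)\}$. In both cases, intersecting with $[a,b]$ replaces $s(x)$ by $\text{clip}_{[a,b]}(s(x))$ in the limits of integration.

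The third step evaluates these pointwise integrals in closed form: for $y=1$ they give $\log b - \log(\text{clip}_{[a,b]}(s(x)))$, and for $y=0$ they give $\log(1-a) - \log(1-\text{clip}_{[a,b]}(s(x)))$. Both cases unify under the $|(1-y)-u|$ convention, using that $\text{clip}_{[a,b]}(1)=b$ and $\text{clip}_{[a,b]}(0)=a$. The term involving $\text{clip}_{[a,b]}(y)$ then plays the same role as the $(y-\text{clip}_{[a,b]}(y))^2$ correction in the Brier case: it encodes the fixed integration endpoints and vanishes in the unbounded limit $a\to 0$, $b\to 1$, recovering Theorem \ref{thm:ll}. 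The main obstacle will be keeping the case analysis clean across the three regimes $s(x)<a$, $a\le s(x)\le b$, and $s(x)>b$, and carefully tracking the absolute-value conventions and subtraction order, since the weight $1/(c(1-c))$ concentrates near the endpoints where sign errors are easy to introduce.
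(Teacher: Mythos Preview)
Your proposal is correct and follows essentially the same route as the paper. The only cosmetic differences are that the paper first abstracts the Fubini/integration-by-parts step into a general lemma valid for any pointwise loss $L$ (Lemma~\ref{lem:combine_classes}) and then specializes to $L(x)=\log(1-x)$, whereas you work directly with the logarithmic weights; and the paper runs the argument in the opposite direction, starting from the clipped pointwise expression and finishing with the change of variables to $\ell$, while you begin with the change of variables and end at the pointwise form.
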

Similarly, this result follows as a direct extension of the proof of Theorem \ref{thm:ll}. The argument structure remains the same, incorporating appropriate modifications to account for the additional constraints in this setting. For a complete derivation, refer to the proof of Theorem \ref{apdx:bounded_logloss} in the Appendix, where the full details are provided.
Like the bounded threshold Brier score, our bounded threshold log loss score is practical to implement: it requires only two calls to a standard log loss function with clipping applied to inputs.
Moreover, when \(a = 0\) and \(b = 1\), the second term vanishes, recovering the standard log loss.

\begin{remark}[Bounded Threshold AUC-ROC]
If we assume that a model is calibrated, then a similar idea can be applied to the AUC-ROC. The resulting metric is equivalent to an expected regret at a set of thresholds defined by the score distribution of the model \textit{in the interval} (see Theorem \ref{thm:bounded_auc} in the Appendix).
However, all of Hand’s original critiques remain. The model is still trained to estimate associations with outcomes rather than decision costs. As a result, normative judgments about cost, harm, and acceptability are deferred to the statistical model rather than specified directly.
\end{remark}

\begin{remark}[Skill Scores]
Although the bounded threshold metrics are precisely an expected regret, and can be rescaled to match other decision costs, in casual use it can be simpler to use them while rescaling so 100\% is a perfect model and 0\% is a naive baseline.
This is known as a skill score; we expand on this in detail in Appendix~\ref{apdx:skillscore}.
\end{remark}

%!TEX root=../main.tex
\subsection{Revisiting the Brier score Critique by \citet{vickers17}}

We reproduce the original results from \citet{vickers17} (see Appendix~\ref{apdx:comparison_vickers17}) and show that bounded Brier score rankings closely track those of net benefit at 5\%. The scores diverge only when net benefit itself varies substantially across thresholds. This indicates that the main obstacle has been tooling rather than theory. Bounded scoring rules provide a principled and interpretable alternative that respects threshold constraints while aligning more closely with clinical decision-making.

A natural question then arises: why not simply average net benefit over the relevant interval?
Area-under-the-curve (AUC) aggregation is not typically practiced in DCA \citep{vickers08}, in part because it lacks a clear interpretation. By leveraging our equivalence with regret, we show how a mathematically similar procedure yields a well-defined average of net benefit over a bounded range of thresholds.

\begin{theorem}[Bdd Threshold Net Benefit]
\label{thm:bounded_dca}
Let $L(x,y) = \begin{cases}
  s(x) & \text{if } y=1 \\
  (1 - s(x)) - \ln(1-s(x)) & \text{if } y=0
\end{cases}$ be a pointwise loss. For a classifier $\f$, the integral of net benefit (NB) over the interval [a,b] is the loss for the predictions clipped to [a,b] minus the loss for the true labels clipped to [a,b]:
\begin{align*}
  \underset{c \sim \text{Unif}(a,b)}{\mathbb{E}}\text{\em NB} (c)
  &=
  \pi -
  \tfrac{1}{b-a}\biggl[
  \underset{(x,y)\sim\mathcal{D}}{\mathbb{E}}
  L( \text{\em clip}_{[a,b]}(s(x)),y)\\
  &\quad- \underset{(x,y)\sim\mathcal{D}}{\mathbb{E}} L(\text{\em clip}_{[a,b]}(y),y)
  \biggr].
\end{align*}
\end{theorem}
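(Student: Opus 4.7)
The plan is to combine Theorem~\ref{thm:net_benefit_h} with a pointwise integration over the cost interval $[a,b]$, paralleling the derivation used for the Bounded Threshold Brier Score. Averaging $c \sim \text{Uniform}(a,b)$ in the identity $\text{NB}(c) = \pi - \Rmin(c)/(1-c)$ reduces the theorem to evaluating
$$\int_a^b \frac{\Rmin(c)}{1-c}\, dc \;=\; \mathop{\mathbb{E}}_{(x,y)\in\mathcal{D}}\Bigl[\int_a^b \frac{\text{per-example regret at }c}{1-c}\,dc\Bigr],$$
where Fubini lets us swap the $c$-integral with the data expectation and reduces the claim to a per-example identity.

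The next step is a per-example, per-label computation. Writing the pointwise regret with optimal threshold $\TAU = c$ (as in Definition~\ref{def:regret}) as $c\,(1-y)\,\mathbb{1}[s(x)\geq c] + (1-c)\,y\,\mathbb{1}[s(x)<c]$, dividing by $1-c$, and integrating over $c\in[a,b]$ splits into two cases. For $y=1$, the integrand collapses to $\mathbb{1}[s(x)<c]$, whose integral over $[a,b]$ is the Lebesgue measure of $(s(x),b]\cap[a,b] = b - \text{clip}_{[a,b]}(s(x))$, uniformly across the three regimes $s(x) < a$, $s(x)\in[a,b]$, $s(x) > b$. For $y=0$, the integrand is $\tfrac{c}{1-c}\mathbb{1}[s(x)\geq c]$, whose integral over $[a,b]$ equals $\int_a^{\text{clip}_{[a,b]}(s(x))} c/(1-c)\,dc$; the key computational ingredient is the antiderivative $F(c) \triangleq -c - \ln(1-c)$ of $c/(1-c)$, which yields $F(\text{clip}_{[a,b]}(s(x))) - F(a)$.

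The final step is to match these closed forms to the claimed loss. Using $\text{clip}_{[a,b]}(1)=b$ and $\text{clip}_{[a,b]}(0)=a$, the per-example quantities above are precisely the differences $L(\text{clip}_{[a,b]}(s(x)),y) - L(\text{clip}_{[a,b]}(y),y)$ produced by the pointwise loss $L$ defined in the theorem: the linear piece absorbs the $b$ and $-a$ boundary terms, while the logarithmic piece absorbs the $-\ln(1-c)$ tail of the antiderivative $F$. Pulling the data expectation back out, dividing by $b-a$, and substituting into $\pi - \tfrac{1}{b-a}\int_a^b \Rmin(c)/(1-c)\,dc$ completes the proof.

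The main obstacle is notational bookkeeping rather than real analysis: one has to verify that the three clipping regimes collapse into a single case-free formula through $\text{clip}_{[a,b]}$, and then check that the asymmetric form of $L$ (linear for $y=1$, linear-plus-log for $y=0$) is exactly what is required to absorb both the antiderivative of $c/(1-c)$ in the $y=0$ case and the trivial antiderivative in the $y=1$ case, so that the evaluation points $F(a)$ and $b$ cleanly reassemble into $L(\text{clip}_{[a,b]}(y),y)$. Once this identification is made, the result is an immediate consequence of Theorem~\ref{thm:net_benefit_h} plus Fubini plus an elementary antiderivative.
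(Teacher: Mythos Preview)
Your approach is essentially the paper's: the appendix proof invokes Lemma~\ref{lem:combine_classes} (really Lemmas~\ref{lem:pos_brier} and~\ref{lem:neg_brier} applied with separate per-class losses) after writing down $dL/dc$, then finishes via Theorem~\ref{thm:net_benefit_h}; you run the same computation in reverse order and at the per-example level rather than through the class CDFs, which is fine and arguably more transparent.

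However, the step where you assert that the $y=1$ integral ``is precisely'' $L(\text{clip}_{[a,b]}(s(x)),1)-L(\text{clip}_{[a,b]}(1),1)$ does not check out with the stated $L$. You correctly compute that integral as $b-\text{clip}_{[a,b]}(s(x))$, but $L(p,1)=p$ gives the difference $\text{clip}_{[a,b]}(s(x))-b$, which has the wrong sign. A quick sanity check: for the always-zero predictor $s\equiv 0$ one has $\text{NB}(c)\equiv 0$ on $(0,1)$, yet the stated right-hand side returns $\pi-\tfrac{1}{b-a}\,\pi(a-b)=2\pi$. This is a typo in the theorem statement---the $y=1$ branch of $L$ must be decreasing in the prediction, e.g.\ $L(p,1)=1-p$---and the paper's own appendix proof makes the compensating sign slip when it writes $+\tfrac{dL(c,1)}{dc}\,\pi F_1(c)$ where Lemma~\ref{lem:pos_brier} in fact supplies $-\tfrac{dL(c,1)}{dc}\,\pi F_1(c)$. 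With that correction your derivation goes through verbatim; the gap is in the bookkeeping, not the method.
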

While mathematical equivalence resolves formal concerns, it does not address semantic limitations.
In prostate cancer screening, for instance, patients may agree on the value of survival as a primary goal but disagree on how to value life with treatment side effects. The direct benefit of treatment is therefore constant, but the net benefit of treatment varies with the threshold.
Standard DCA treats the gross benefit minus gross cost of a true positive as fixed across patients, even when their valuations of treatment burdens differ.
This is inconsistent in settings with heterogeneous preferences.
The Brier score, in contrast, fixes the direct false negative penalty while varying the overtreatment cost with the threshold.
This allows the net value of a true positive to adjust with patient preferences, yielding more coherent semantics for population-level averaging under cost heterogeneity.
These semantics can also be recovered from decision curves through axis rescaling: quadratic transformations yield the Brier score (Appendix~\ref{apdx:quad_dca}), while logarithmic transformations yield log loss (Appendix~\ref{apdx:log_dca}).
Figure~\ref{fig:dca} illustrates these relationships.

Although DCA is mathematically close to the Brier score and log loss, its formulation is inherently pointwise. Averaging over thresholds therefore lacks a clear interpretation, limiting DCA to fixed-threshold comparisons.
%!TEX root=../main.tex
%
%!TEX root=../main.tex
%
\section{\texttt{briertools}: A PYTHON PACKAGE FOR FACILITATING THE ADOPTION OF BRIER SCORES}
\label{sec:brier}

We introduce a Python package, \texttt{briertools}, designed to fill the gap in evaluation support tools by making Brier scores and their bounded-threshold variants easier to apply in practice. The package provides utilities for computing bounded-threshold scoring metrics and for visualizing the associated regret and decision curves. It is installable via \texttt{pip} and supports common use cases with minimal overhead.

In addition to providing efficient implementations that exploit the duality between pointwise error and average regret, \texttt{briertools} includes plotting functions for regret curves and threshold-based diagnostics.
Although plotting regret against thresholds is slower and less precise than our formulas for estimating quadrature, it is valuable for debugging and exploratory analysis.
As recommended by \citet{tryptich24}, such visualizations help detect unexpected behaviors across thresholds and offer deeper insight into model performance under varying decision boundaries.

\begin{figure}[ht]
\begin{center}
\includegraphics[width=\columnwidth]{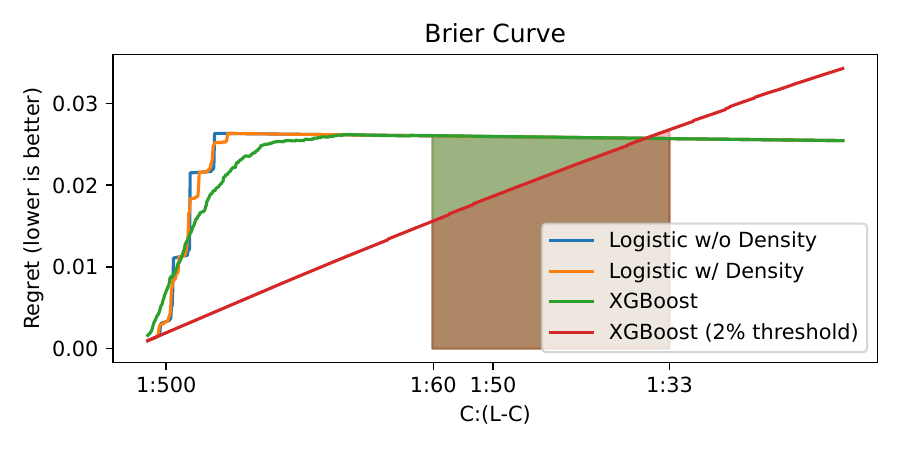}
\vskip -0.1in
\caption{
\small A comparison of breast cancer prediction models' performance over the range of commonly suggested thresholds for treatment.
}
\label{fig:tamoxifen}
\end{center}
\end{figure}
%!TEX root=../main.tex
%
\subsection{Example: Breast Cancer Risk Prediction under Ambiguous Treatment Thresholds}

\begin{table}[ht]
\centering
\scriptsize
\begin{tabular}{lcccc}
\toprule
Metric & w/o dens. & w/ dens. & xgb & xgb (2\%) \\
\midrule
AUC & 0.601 & \textbf{0.606} & 0.574 & 0.591 \\
Brier & 0.0263 & 0.0263 & \textbf{0.0263} & 0.0415 \\
Log Loss & 0.156 & 0.156 & \textbf{0.155} & 0.193 \\
Thr. LogLoss & 0.0153 & 0.0153 & 0.0153 & \textbf{0.0123} \\
Thr. Brier & 0.000662 & 0.000662 & 0.000662 & \textbf{0.000545} \\
\bottomrule
\end{tabular}
\caption{Performance comparison of classifiers on breast cancer risk prediction. Modifying the internal decision threshold for xgboost during training decreases overall performance but improves performance in the clinically relevant range of thresholds.}
\label{tab:cancer_results}
\end{table}

Treatment guidelines for breast cancer prevention often recommend Tamoxifen when a patient’s estimated cancer risk exceeds a specified threshold.
However, there is no consensus as to what this threshold should be. Historical standards have set benefit at 1.66\% risk \cite{barlow06}, while the US Preventive Services Task Force now recommends 3\%.
A 2\% threshold is also frequently cited in the literature. Such disagreement leaves clinicians with the challenge of making treatment decisions under uncertain, contested standards.

\paragraph{Methods}
We analyze the Breast Cancer Surveillance Consortium Risk Estimation Dataset
\footnote{Licensing details can be found at \url{https://www.bcsc-research.org/index.php/datasets/rfdataset}.}
\cite{barlow06},
following prior work by \citet{yang22}.
They compared models trained with and without breast tissue density as a predictor, finding little difference in baseline Brier score.
Building on this, we additionally train
\footnote{We trained using CPU only, on an off-the-shelf laptop.}
and evaluate two XGBoost classifiers, one of which was modified to use an internal decision threshold of 2\%.
We assess all models with both traditional metrics (AUC-ROC, Brier score, log loss) and bounded-threshold scoring rules over the clinically plausible range of 1.66\%-3\%.

\paragraph{Findings}
Across the full range of thresholds, the modified XGBoost model performs worse than both the baseline XGBoost and the logistic models from \citet{yang22}. However, when evaluation is restricted to the clinically relevant interval, it outperforms all alternatives.

\paragraph{Sensitivity to Bound Selection}
The Log Loss curve (Figure~\ref{fig:tamoxifen}) supports a visual sensitivity analysis.
The bounded-threshold Log Loss for any interval $[a,b]$ is the average height of a model's curve over that interval.
In the clinically relevant region near $[1.66\%,\, 3\%]$, the XGBoost model with a 2\% internal threshold (red) has the lowest regret; this ranking is stable under moderate perturbation of the bounds.
As the interval widens toward the full $[0,1]$, the red curve's steep rise at higher cost ratios eventually dominates, which is why the global Log Loss ranks the model worst.

\paragraph{Implications}
This case study demonstrates that threshold-aware evaluation can change model selection. While global metrics penalize models that deviate from average performance, bounded-threshold scoring rules reveal which models are best suited for specific decision contexts. In domains such as medicine, where professional consensus on treatment thresholds is often absent, evaluation methods must capture performance over plausible ranges rather than relying on fixed cutoffs.

%

%

%

%

%

%

%

%

%!TEX root=../main.tex
%
\paragraph{Commensurable Calibration and Discrimination}
A distinctive feature of \texttt{briertools} is that it allows users to evaluate calibration and discrimination on a common, commensurable scale. This is something that standard evaluation practices cannot do.

Top-\(K\) metrics, for example, evaluate only the ordering of predicted scores and are entirely insensitive to calibration. They can be used only under the assumption that more than
\(K\) individuals would benefit from a positive label. In response, it is often recommended that practitioners report both AUC-ROC and Expected Calibration Error (ECE). Yet these numbers cannot be meaningfully combined, leaving practitioners without a principled way to make joint decisions.

In contrast, proper scoring rules such as the Brier score and log loss inherently account for both discrimination and calibration \citep{shen05, tryptich24}; they also admit additive decompositions that make this distinction explicit.
We obtain the post-hoc calibrated model by applying the Pool Adjacent Violators (PAV) algorithm to the test predictions, following the procedure described in \citet{tryptich24}.
The additive decomposition then follows directly.
For the Brier Score, this is a standard bias-variance decomposition for squared error, and for log loss, the decomposition separates calibration error from irreducible uncertainty via KL-divergence between the calibrated and uncalibrated models \citep{shen05}.
\begin{figure}[ht]
\begin{center}
\includegraphics[width=\columnwidth]{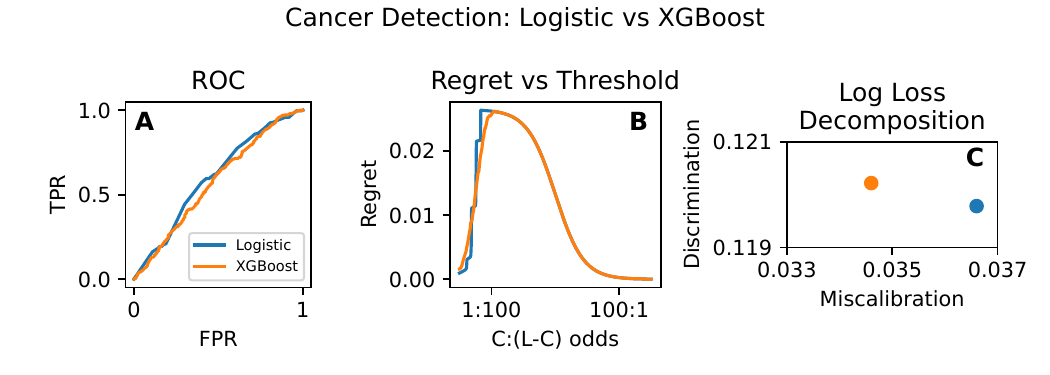}
\vskip -0.1in
\caption{
\small A. ROC plot shows XGBoost has worse discrimination than Logistic Regression models as recommended by \citet{yang22}.  B. Log Loss Curve shows XGBoost has better average regret.  C.  Decomposition reconciles the two; XGBoost has much better calibration, and only slightly worse discrimination.
}
\label{fig:rocvbrier}
\end{center}
\end{figure}

This matters for practitioners because miscalibration can strongly affect evaluation outcomes. Subgroup analyses based on top-
\(K\) metrics, for instance, may suggest misleading fairness conclusions when calibration is poor \citep{subgroupauc19}, and AUC-ROC does not reflect error rates at operational thresholds \citep{aclu23}. Figure~\ref{fig:rocvbrier} shows how log loss decomposition makes these trade-offs transparent: a model with higher AUC but poor calibration (blue) can be revealed as inferior to one with slightly weaker discrimination but much stronger calibration (orange). By surfacing this calibration gap directly, \texttt{briertools} enables practitioners to evaluate models in a way that is both rigorous and actionable.

%!TEX root=../main.tex
%
\section{DISCUSSION}

The regret formalism reveals a unifying structure beneath Table~\ref{tab:metric_taxonomy}: every metric in the table is a weighted average of $\Rmin(c)$, and the metrics differ only in which cost ratios receive weight.
This makes the table a prescriptive guide: to choose a metric, identify the deployment setting and use the weighting that matches it.
Reading the table against our survey of 2,610 papers from ICML, FAccT, and CHIL~2024 reveals a systematic mismatch between the metrics practitioners report and the settings they occupy.

\textbf{Dependent decisions.}
When a fixed budget couples decisions across instances, ranking metrics become appropriate: Net Benefit@$K$ when the budget $K$ is known, AUC-ROC when $K$ varies across deployments.
However, when used for independent decisions, AUC-ROC implicitly weights cost ratios by the model's own score distribution~\citep{hand09}, letting a model trained to predict outcomes implicitly determine how costly each type of error is.

\textbf{Fixed thresholds.}
For independent decisions, accuracy dominates practice (${>}50\%$ of binary-classification papers at ICML and FAccT), yet it is simply net benefit at the special case $c = 1/2$ (Proposition~\ref{prop:accuracy_regret}), justified only when error costs are equal.
DCA allows asymmetric costs but remains pointwise, evaluating $\Rmin$ at a single cost ratio.

\textbf{Uncertain thresholds.}
Eliminating coupled decisions and fixed thresholds places us in the top-left cell of Table~\ref{tab:metric_taxonomy}---the setting of most real deployments, yet the least used in our survey (proper scoring rules appear in fewer than $15\%$ of papers and fewer than $5\%$ at ICML).
Standard proper scoring rules average $\Rmin(c)$ over the full unit interval, giving equal weight to irrelevant cost ratios---precisely the over-averaging that~\citet{vickers17} criticized.
Our bounded-threshold variants (Theorems~\ref{thm:bounded_brier},~\ref{thm:bounded_logloss}) resolve this by averaging only over a practitioner-specified interval $[a,b]$.
The DCA reconciliation (Theorem~\ref{thm:bounded_dca}) confirms that bounded Brier \emph{is} averaged net benefit over $[a,b]$: DCA is the right tool when the threshold is fixed, whereas bounded scores provide a principled average when the threshold is uncertain but confined to a plausible range.
The breast cancer case study shows this distinction has practical consequences: bounded evaluation reverses model rankings, selecting the model that dominates in the clinically relevant range.

These bounded scoring rules jointly reward calibration and discrimination, admit additive decompositions (Figure~\ref{fig:rocvbrier}), and encode threshold uncertainty as a simple interval rather than a full distributional specification.
As with any threshold-based evaluation, these interpretations require scores on a probability scale; this calibration assumption should be stated explicitly whenever bounded metrics are reported.

\vspace{-.5em}
\paragraph{Limitations.}
The clean semantics of averaging regret over a cost-ratio interval depend on the assumption that the gross cost of a positive label does not depend on the true class, and that the false-negative cost is invariant across cost-ratio scenarios.

\vspace{-.5em}
\paragraph{Guidance for Practitioners.}
To support adoption, we provide \texttt{briertools}, an \texttt{sklearn}-compatible package for computing and visualizing bounded-threshold scores, regret curves, and calibration decompositions.

\vspace{-.5em}
\paragraph{Future Work.}
Extending these results to deployment shift or multi-class settings would broaden their applicability.

\section{Acknowledgements}
We thank the participating women, mammography facilities, and radiologists for the data they have provided. You can learn more about the BCSC at: http://www.bcsc-research.org/.

\bibliographystyle{abbrvnat}
\bibliography{references}

%%%%%%%%%%%%%%%%%%%%%%%%%%%%%%%%%%%%%%%%%%%%%%%%%%%%%%%%%%%%
\section*{Checklist}

\begin{enumerate}

  \item For all models and algorithms presented, check if you include:
  \begin{enumerate}
    \item A clear description of the mathematical setting, assumptions, algorithm, and/or model. [Yes]
    \item An analysis of the properties and complexity (time, space, sample size) of any algorithm. [Yes]
    \item (Optional) Anonymized source code, with specification of all dependencies, including external libraries. [Yes]
  \end{enumerate}

  \item For any theoretical claim, check if you include:
  \begin{enumerate}
    \item Statements of the full set of assumptions of all theoretical results. [Yes]
    \item Complete proofs of all theoretical results. [Yes]
    \item Clear explanations of any assumptions. [Yes]     
  \end{enumerate}

  \item For all figures and tables that present empirical results, check if you include:
  \begin{enumerate}
    \item The code, data, and instructions needed to reproduce the main experimental results (either in the supplemental material or as a URL). [Yes]
    \item All the training details (e.g., data splits, hyperparameters, how they were chosen). [Yes]
    \item A clear definition of the specific measure or statistics and error bars (e.g., with respect to the random seed after running experiments multiple times). [Yes]
    \item A description of the computing infrastructure used. (e.g., type of GPUs, internal cluster, or cloud provider). [Yes]
  \end{enumerate}

  \item If you are using existing assets (e.g., code, data, models) or curating/releasing new assets, check if you include:
  \begin{enumerate}
    \item Citations of the creator If your work uses existing assets. [Yes]
    \item The license information of the assets, if applicable. [Yes]
    \item New assets either in the supplemental material or as a URL, if applicable. [Not Applicable]
    \item Information about consent from data providers/curators. [Not Applicable]
    \item Discussion of sensible content if applicable, e.g., personally identifiable information or offensive content. [Not Applicable]
  \end{enumerate}

  \item If you used crowdsourcing or conducted research with human subjects, check if you include:
  \begin{enumerate}
    \item The full text of instructions given to participants and screenshots. [Not Applicable]
    \item Descriptions of potential participant risks, with links to Institutional Review Board (IRB) approvals if applicable. [Not Applicable]
    \item The estimated hourly wage paid to participants and the total amount spent on participant compensation. [Not Applicable]
  \end{enumerate}

\end{enumerate}

\clearpage
\appendix
\thispagestyle{empty}

% Supplementary material: To improve readability, you must use a single-column format for the supplementary material.
\onecolumn
\aistatstitle{Appendix: Proofs and Further Details}
%!TEX root=../main.tex
\section{Historical Development of Dependent vs Independent Metrics}
\label{apdx:relatedwork}
\paragraph{Dependent Decisions.}
The idea of plotting false positive rate (FPR) and true positive rate (TPR) against decision thresholds, but not against each other, originates in World War II-era work on signal detection theory \cite{north43,north63,mit51}.
The ROC plot itself emerged in post-war work on radar signal detection theory \cite{birdsall53, birdsall54} and spread to psychological signal detection theory through the work of Tanner and Swets \cite{tanner53, swetsbirdsall56}.
From there, the ROC plot was adopted in radiology, where detecting blurry tumors on X-rays was recognized as a psychophysical detection problem \cite{metz78}.
The use of the Area under Receiver Operating Characteristics Curve (AUC-ROC) began with psychophysics \cite{swets66} and was particularly embraced by the medical community \cite{metz78,hanley82}.
From there, as AUC-ROC gained traction in medical settings, \citet{spackman89} proposed its introduction to broader machine learning applications.
At a time when popular techniques like Naive Bayes and SVMs were wildly miscalibrated, being able to report pure ranking performance was useful to methods developers.
This idea was further popularized by \citet{bradley97} and extended in studies examining connections between AUC and accuracy \cite{rocaccuracy05}.
Although more widespread use of calibration techniques such as Platt scaling \citep{platt99} and isotonic regression \citep{pav55} (which is equivalent to computing the convex hull of the ROC curve \citep{pavroc07}) now facilitates reliable probability estimates, AUC-ROC remains prevalent, especially in clinical settings, due to its robustness to score miscalibration.
There have been consistent critiques of the lack of calibration information in the ROC curve \cite{calibration19}, \cite{aclu23}.

Over the years there have been a few attempts to evaluate area under only a portion of the ROC curve \citet{mcclish89}, \citet{balancedaccuracy23}, but they haven't produced a clear decision-theoretic interpretation.
\vspace{-1.0em}
\paragraph{Independent Decisions.}
The idea of maximizing expected value was popularized by \cite{neumann44}.
The link between incentive-compatible forecast metrics (e.g., Brier score \cite{brier50}, log loss \cite{good52,mccarthy56}) and expected regret was formalized by \citet{shuford66}, clarified by \citet{savage71}, and later connected to regret curves by \citet{schervish89}.
These ideas were revisited and extended through Brier Curves \citep{hand99, costcurve06, briercurve11} and Beta-distribution modeling of cost uncertainty \citep{recentbeta24}.
\citet{hand09} and \citet{brieraucrank12} showed that AUC-ROC can be interpreted as a cost-weighted average regret, especially under calibrated or quantile-based forecasts.
Separately, \citet{vickers06, vickers08} and \citet{vickers17} introduced decision curve analysis (DCA) as a threshold-restricted net benefit visualization, arguing it offers more clinical relevance than Brier-based aggregation.
Recent work has further examined the decomposability of Brier and log loss into calibration and discrimination components \citep{shen05, siegert17, tryptich24}, providing guidance on implementation and visualization.
%!TEX root=../main.tex
\section{Angstrom Cost/Loss model}
\label{apdx:angstrom}
Our consequentialist framework evaluates binary decisions using (state, action, value) triplets, connecting the true class ($y$), predicted label ($a$), and outcome value ($V(y,a)$).
We use a cost structure originating with \citet{angstrom22} and used by \citet{brier55}, implicitly based on the idea that negative labels are the majority class, and that the baseline true negative should have a 0 value.
The cost/loss parameterization assumes that the failure to detect a positive case (i.e. a false negative) has a downstream cost of $L$.
It then makes the assumption that any positive prediction incurs an immediate cost $C$ (e.g., treatment cost) which does not depend on the true class.
This is usually true, and allows us to compare outcomes across different scenarios where different treatment costs are incurred.
Where it is false, the formalism is still expressive enough to fit any contingency table we might want, but comparisons across different treatment costs will no longer be valid.

\begin{table}[ht]
    \centering
    \begin{tabular}{c|cc}
        $V(y, a)$ & $a = 0$ & $a = 1$ \\
        \hline
        $y = 0$ & 0 \textcolor{lightgray}{(True Neg)} & $C$ \textcolor{lightgray}{(False Pos)} \\
        $y = 1$ & $L$ \textcolor{lightgray}{(False Neg)} & $C$ \textcolor{lightgray}{(True Pos)}
    \end{tabular}
\end{table}

We assume $C > 0$ because otherwise assigning the label $1$ strictly dominates using a classifier.
We assume $L > C$ because otherwise assigning the label $0$ strictly dominates using a classifier.
As such, the ratio $C/L$ is constrained to lie in the interval $(0,1)$.
We want clear semantics for comparing different cost ratios, so we divide thorough by $L$.
This is equivalent to assuming that the loss from false negatives is the same in all scenarios, and only the treatment cost $C$ varies.
We then borrow from online learning the idea of measuring the optimality gap rather than the absolute outcome.

\begin{table}[ht]
    \centering
    \begin{tabular}{c|cc}
        $V(y, a)$ & $a = 0$ & $a = 1$ \\
        \hline
        $y = 0$ & 0 \textcolor{lightgray}{(True Neg)} & $c$ \textcolor{lightgray}{(False Pos)} \\
        $y = 1$ & $1 - c$ \textcolor{lightgray}{(False Neg)} & 0 \textcolor{lightgray}{(True Pos)}
    \end{tabular}
\end{table}

Note that in an online setting, regret refers to an optimality gap to the best possible classifier training algorithm.
Our setting is offline, so we borrow the terminology to refer to the optimality gap with the best possible classifier.
%!TEX root=../main.tex
\section{Regret}
\label{apdx:checklist_regret}

\begin{theorem}[Optimal Threshold]
\label{thm:optimal_threshold}
\[
  \arg\min_{\TAU} \R(\f,\pi,\TAU,c,\mathcal{D}) = c
\]
\begin{proof}
We find the stationary points as follows:
\begin{align*}
\R(\f, \pi, \TAU, c, \mathcal{D})
&= c \cdot (1 - \pi) \cdot (1 - F_0(\TAU))
\;+\; (1 - c) \cdot \pi \cdot F_1(\TAU)
\\[0.5em]
0 &= \frac{\partial \R(\f, \pi, \TAU, c, \mathcal{D})}{\partial \TAU}
\\
&= -c (1 - \pi) \cdot f_0(\TAU) + (1 - c) \pi \cdot f_1(\TAU)
\end{align*}
using the identity $\frac{d}{d\TAU} F_0(\TAU) = f_0(\TAU)$ and$\frac{d}{d\TAU} F_1(\TAU) = f_1(\TAU)$.  This gives the condition:
\[
c (1 - \pi) f_0(\TAU) = (1 - c) \pi f_1(\TAU)
\]
Rewriting this in terms of conditional probabilities:
\[
\frac{\pi f_1(\TAU)}{\pi f_1(\TAU) + (1 - \pi) f_0(\TAU)} = c
\quad \Rightarrow \quad
c =  \frac{P(y= 1, s(x)=\TAU)}{P(s(x)=\TAU)} = P(y = 1 \mid s(x) = \TAU)
\]

\end{proof}
This will be a minimum if we have convexity, so that \[
    \frac{\partial}{\partial \TAU} P(y= 1 \mid s(x)=\TAU) > 0.
\]
If the scoring function \(s(x)\) is calibrated, then: \[
    P(y= 1 \mid s(x)=\TAU) = \TAU,
\] which gives us convexity and therefore,
\[c = \TAU.\]
\end{theorem}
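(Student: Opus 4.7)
The plan is to minimize the regret expression from Definition~\ref{def:regret}, namely
\[
\R(\f,\pi,c,\TAU,\mathcal{D}) = c(1-\pi)(1-F_0(\TAU)) + (1-c)\pi F_1(\TAU),
\]
by differentiating with respect to $\TAU$ and analyzing the stationary point. First I would compute $\partial \R / \partial \TAU$, using $F_i'(\TAU) = f_i(\TAU)$ for the class-conditional score densities, which yields $-c(1-\pi)f_0(\TAU) + (1-c)\pi f_1(\TAU)$. Setting this expression to zero gives the balance condition $c(1-\pi)f_0(\TAU) = (1-c)\pi f_1(\TAU)$ at any interior stationary threshold.

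Next I would rearrange this first-order condition to isolate $c$, obtaining $c = \pi f_1(\TAU)/[\pi f_1(\TAU) + (1-\pi)f_0(\TAU)]$. The right-hand side is precisely the Bayes posterior $P(y=1 \mid s(x)=\TAU)$ at the score value $\TAU$, obtained by expanding the joint density of $(y, s(x))$ over the marginal density of $s(x)$. Under the calibration assumption, this posterior equals $\TAU$ itself, so the stationary condition collapses to $\TAU = c$.

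The final step is to certify that this critical point is a minimum rather than a maximum or saddle. For a calibrated classifier, $P(y=1 \mid s(x)=\TAU)$ is nondecreasing in $\TAU$ (it equals $\TAU$), so the gradient $\partial \R/\partial \TAU$ transitions from negative to positive as $\TAU$ crosses $c$. This sign change gives a global minimum on $[0,1]$ without needing a second-derivative computation, and it dovetails with verifying convexity of $\R$ in $\TAU$, which would be the cleaner presentation if a smooth density is available.

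The main obstacle I anticipate is a careful handling of the calibration step: Bayes' rule must be applied in the space of \emph{scores} rather than raw inputs $x$, and this requires the class-conditional score distributions to admit densities $f_0, f_1$. In settings where $s(x)$ has atoms (e.g., a classifier with discrete outputs or ties), the derivative argument would need to be replaced by a subgradient analysis or a discrete sum, with the calibration condition $\mathbb{E}[y \mid s(x)=\TAU] = \TAU$ playing the same role at each atom. A secondary subtlety is boundary behavior at $\TAU \in \{0,1\}$, which can be handled by noting that the one-sided derivatives have the correct sign there whenever $c \in (0,1)$.
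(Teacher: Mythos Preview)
Your proposal is correct and follows essentially the same approach as the paper: differentiate the regret in $\TAU$, set the derivative to zero to obtain $c(1-\pi)f_0(\TAU) = (1-c)\pi f_1(\TAU)$, rewrite this as $c = P(y=1 \mid s(x)=\TAU)$ via Bayes' rule on scores, invoke calibration to conclude $\TAU = c$, and then check the sign change of the derivative (equivalently, monotonicity of the posterior) to confirm a minimum. Your additional remarks on atoms in the score distribution and boundary behavior go slightly beyond what the paper addresses but do not change the argument.
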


\begin{theorem}[Accuracy as a function of Regret]
\label{thm:accuracy_regret}
\[\text{Accuracy}(\f, \mathcal{D}) =  1 - 2\cdot \R(\f,\pi, c = 1/2, \TAU, \mathcal{D})\]
\begin{proof}
  \begin{align*}
  \text{Accuracy}(\f, \mathcal{D}) &\triangleq \frac{1}{n}\sum_{i=1}^{n}\mathbb{I}\{\f(x_i;\TAU) = y_i\}
  \\&= P\bigl(\f(x;\TAU)=y\bigr)
  \\&= P\bigl(\f(x;\TAU)=0,\, y=0\bigr) \;+\; P\bigl(\f(x;\TAU)=1,\, y=1\bigr)
  \\&= P(y=0)\,P\bigl(\f(x;\TAU)=0\mid y=0\bigr)
  \;+\; P(y=1)\,P\bigl(\f(x;\TAU)=1\mid y=1\bigr)
  \\&= (1-\pi)\,P(s(x)<\TAU\mid y=0) \;+\; \pi\,P(s(x)\ge\TAU\mid y=1)
  \\&= (1-\pi)\,F_0(\TAU) \;+\; \pi\Bigl(1-F_1(\TAU)\Bigr),
  \\&= 1 - \Bigl((1-\pi)\Bigl(1-F_0(\TAU)\Bigr) \;+\; \pi\,F_1(\TAU)\Bigr)
  \\&= 1 - 2 \biggl(\tfrac{1}{2}\Bigl((1-\pi)\Bigl(1-F_0(\TAU)\Bigr) \;+\; \pi\,F_1(\TAU)\Bigr)\biggr)
  \\&= 1 - 2\;\R(\f,\pi,c=\frac{1}{2},\TAU,\mathcal{D})
  \end{align*}
\end{proof}
\end{theorem}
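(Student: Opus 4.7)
The plan is to directly unfold both sides of the identity using the definitions already in hand, showing that twice the regret at $c=1/2$ is exactly the classifier's error rate, so that subtracting from $1$ recovers accuracy. The argument is essentially a bookkeeping exercise, so no real ``obstacle'' stands in the way; the only care needed is to ensure that the probabilities $P(\f(x;\TAU)=0 \mid y=0)$ and $P(\f(x;\TAU)=1 \mid y=1)$ are translated correctly into the CDFs $F_0$ and $F_1$.

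First I would rewrite accuracy as an expectation rather than an empirical average: by the definition of $\f(\cdot;\TAU)$, the event $\{\f(x;\TAU)=y\}$ decomposes into the disjoint union $\{s(x)<\TAU, y=0\} \cup \{s(x)\geq \TAU, y=1\}$. Taking probabilities and conditioning on $y$, this gives
\[
\text{Accuracy}(\f,\mathcal{D}) = (1-\pi)\,F_0(\TAU) + \pi\,\bigl(1-F_1(\TAU)\bigr).
\]
Here I use that $F_0(\TAU) = P(s(x) < \TAU \mid y=0)$ is the CDF of negative-class scores and $1-F_1(\TAU) = P(s(x) \geq \TAU \mid y=1)$ is the corresponding survival function for positives.

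Next I would rearrange this to exhibit the error rate. Since $(1-\pi) + \pi = 1$, I can write
\[
\text{Accuracy}(\f,\mathcal{D}) = 1 - \bigl[(1-\pi)(1-F_0(\TAU)) + \pi F_1(\TAU)\bigr].
\]
The bracketed quantity is precisely the probability of misclassification, split into the false-positive contribution $(1-\pi)(1-F_0(\TAU))$ and the false-negative contribution $\pi F_1(\TAU)$.

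Finally I would compare with Definition~\ref{def:regret}. Substituting $c=1/2$ into
\[
\R(\f,\pi,c,\TAU,\mathcal{D}) = c(1-\pi)(1-F_0(\TAU)) + (1-c)\pi F_1(\TAU)
\]
pulls out a common factor of $1/2$, so that $2\,\R(\f,\pi,1/2,\TAU,\mathcal{D})$ equals the bracketed error rate from the previous step. Combining yields $\text{Accuracy}(\f,\mathcal{D}) = 1 - 2\,\R(\f,\pi,1/2,\TAU,\mathcal{D})$, as claimed. No calibration or optimality of $\TAU$ is required; the identity holds for the actual threshold used in the classifier, which is consistent with the remark in the proposition statement.
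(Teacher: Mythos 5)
Your proof is correct and follows essentially the same route as the paper's: express accuracy as a probability, condition on $y$ to rewrite it via $F_0$ and $F_1$, rearrange to isolate the error rate, and match that against $2\,\R(\f,\pi,1/2,\TAU,\mathcal{D})$. The only superficial difference is direction of comparison (you substitute $c=1/2$ into the regret definition and compare, whereas the paper factors a $1/2$ out of the error rate), but the content is identical.
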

%!TEX root=../main.tex
%
\section{Appendix: Bounded Threshold Mixtures}
\label{apdx:checklist_brier}
The overall plan of this proof is to first use integration by parts to prove an equivalence between pointwise loss functions integrated over the distribution of data, and weighted $\ell^0$ loss functions integrated over an interval of costs.

\subsection{Lemmas}
\begin{lemma}[Positive Class]
\label{lem:pos_brier}
Let $0 < a < b < 1$, and let $L(x)$ be a pointwise loss function for the positive class.
\begin{align*}
  \int_{s=0}^{s=1}
  \Bigl(
    L(\max(a,\min(b,s)))
    - L(\max(a,\min(b,1)))
  \Bigr)
  \; dF_1(s)
  &=\int_{c=a}^{c=b}
  -\frac{dL(c)}{dc} \; F_1(c) \; dc
\end{align*}
The proof will simply be integration by parts, with some careful handling of the limits of integration.
\begin{proof}
\begin{align*}
  &\int_{s=0}^{s=1}
  \Bigl(
    L(\max(a,\min(b,s))) - L(\max(a,\min(b,1)))
  \Bigr)
  \; dF_1(s)
  \\&= \int_{s=0}^{s=1}
  \Bigl(
    L(\max(a,\min(b,s))) - L(b)
  \Bigr)
  \; dF_1(s)
  \\&= \int_{s=0}^{s=b}
  \Bigl(
    L(\max(a,s)) - L(b)
  \Bigr)
  \; dF_1(s)
  \\&= \int_{s=a}^{s=1}
  \Bigl(
    \int_{c=\max(a,s)}^{c=b}
    -\frac{dL(c)}{dc} \; dc
  \Bigr)
  \; dF_1(s)
  \\&= \int_{c=a}^{c=b}
  \Bigl(
    \int_{s=0}^{s=c}
    \; dF_1(s)
  \Bigr)
  -\frac{dL(c)}{dc} \; dc
  \\&= \int_{c=a}^{c=b}
  -\frac{dL(c)}{dc}
  \Bigl(
    F_1(c) - F_1(0)
  \Bigr)
  \; dc
  \\&= \int_{c=a}^{c=b}
  -\frac{dL(c)}{dc}
  \;F_1(c)
  \; dc
\end{align*}
\end{proof}
\end{lemma}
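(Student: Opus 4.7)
The plan is to convert the pointwise loss difference on the left-hand side into an integral of $-L'(c)$ over a $c$-interval that depends on $s$, and then swap the order of integration by Fubini to recover the single integral over $c$ on the right-hand side. Conceptually, this is integration by parts against the CDF $F_1$, with the ``boundary term'' absorbed by the subtraction of $L(\max(a,\min(b,1)))$ and with careful handling of the clipping operations.

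The first step is to simplify the piecewise expression. Since $a < b < 1$, we have $\max(a,\min(b,1)) = b$, so the subtracted constant is just $L(b)$. For $s > b$ the clipped value is also $b$, so the integrand vanishes and the outer integral collapses to $s \in [0,b]$; on that range $\max(a,\min(b,s)) = \max(a,s)$. This lets me rewrite the difference via the fundamental theorem of calculus as
\[
L(\max(a,s)) - L(b) \;=\; \int_{c=\max(a,s)}^{c=b} \Bigl(-\tfrac{dL(c)}{dc}\Bigr)\, dc,
\]
turning the left-hand side into a double integral over $(s,c)$ with $0 \le s \le b$ and $\max(a,s) \le c \le b$.

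The second step is to swap the order of integration. For each fixed $c \in [a,b]$, the constraint $\max(a,s) \le c$ reduces to $s \le c$ (since $a \le c$ holds automatically), so the inner integral in $s$ becomes $\int_{s=0}^{s=c} dF_1(s) = F_1(c) - F_1(0) = F_1(c)$, assuming scores lie in $[0,1]$ so that $F_1(0) = 0$. Reassembling the pieces gives exactly $\int_{c=a}^{c=b} -L'(c)\, F_1(c)\, dc$.

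The main obstacle is not analytic but combinatorial: carefully splitting the $s$-domain into the regions $[0,a]$, $[a,b]$, $[b,1]$ so that the nested $\max/\min$ collapses into clean integration limits, and then verifying that Fubini applies (which it does provided $L$ is absolutely continuous on $[a,b]$, so that $L'$ exists a.e.\ and the representation $L(u)-L(b)=\int_u^b -L'(c)\,dc$ holds). Once this bookkeeping is done, the exchange of integration order and the emergence of $F_1(c)$ are routine.
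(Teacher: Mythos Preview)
Your proposal is correct and follows essentially the same approach as the paper: simplify the clipped constant to $L(b)$, drop the $s>b$ region where the integrand vanishes, use the fundamental theorem of calculus to write $L(\max(a,s))-L(b)$ as $\int_{\max(a,s)}^{b}-L'(c)\,dc$, then swap the order of integration so that the inner $s$-integral yields $F_1(c)-F_1(0)=F_1(c)$. Your explicit mention of the absolute-continuity hypothesis needed for the FTC step and for Fubini is a nice touch that the paper leaves implicit.
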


\begin{lemma}[Negative Class]
\label{lem:neg_brier}
Let $0 < a < b < 1$, and let $L(x)$ be a pointwise loss function for the negative class.
\begin{align*}
  \int_{s=0}^{s=1}
  \Bigl(
    L(\max(a,\min(b,s))) - L(\max(a, \min(b,0)))
  \Bigr)
  \; dF_0(s)
  &=\int_{c=a}^{c=b}
  \frac{dL(c)}{dc} \; (1-F_0(c)) \; dc
\end{align*}
The proof will simply be integration by parts, with some careful handling of the limits of integration.
\begin{proof}
\begin{align*}
  &\int_{s=0}^{s=1}
  \Bigl(
    L(\max(a,\min(b,s))) - L(\max(a,\min(b,0)))
  \Bigr)
  \; dF_0(s)
  \\&= \int_{s=0}^{s=1}
  \Bigl(
    L(\max(a,\min(b,s))) - L(a)
  \Bigr)
  \; dF_0(s)
  \\&= \int_{s=a}^{s=1}
  \Bigl(
    L(\min(b,s)) - L(a)
  \Bigr)
  \; dF_0(s)
  \\&= \int_{s=a}^{s=1}
  \Bigl(
    \int_{c=a}^{c=\min(b,s)}
    \frac{dL(c)}{dc} \; dc
  \Bigr)
  \; dF_0(s)
  \\&= \int_{c=a}^{c=b}
  \Bigl(
    \int_{s=c}^{s=1}
    \; dF_0(s)
  \Bigr)
  \frac{dL(c)}{dc} \; dc
  \\&= \int_{c=a}^{c=b}
  \frac{dL(c)}{dc}
  \Bigl(
    F_0(1) - F_0(c)
  \Bigr)
  \; dc
  \\&= \int_{c=a}^{c=b}
  \frac{dL(c)}{dc}
  \Bigl(1 - F_0(c)\Bigr)
  \; dc
\end{align*}
\end{proof}
\end{lemma}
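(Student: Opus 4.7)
The plan is to prove this via integration by parts combined with a careful Fubini-style interchange, exactly mirroring the structure of the Positive Class lemma but with the clipping thresholds flipped. The first move is to simplify the constant reference point: since $0 < a$, we have $\min(b,0) = 0 < a$, so $\max(a, \min(b,0)) = a$, and the RHS integrand becomes $L(\max(a,\min(b,s))) - L(a)$.

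Next, I would partition the outer integral according to how $s$ interacts with the clipping interval $[a,b]$. For $s < a$, the clipped value equals $a$, so $L(a) - L(a) = 0$ and that region contributes nothing. For $s \in [a, b]$, the clipped value is $s$ itself, and for $s > b$ the clipped value is $b$. Both surviving cases can be packaged uniformly as $L(\min(b,s)) - L(a)$ with the outer integral running from $a$ to $1$. Then I would rewrite $L(\min(b,s)) - L(a) = \int_{c=a}^{c=\min(b,s)} \frac{dL(c)}{dc}\, dc$, which is the key step that introduces the inner integration variable $c$ and sets up the swap.

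The core of the proof is then interchanging the order of integration: the constraint $a \le c \le \min(b, s)$ combined with $s \in [a,1]$ translates into $c \in [a,b]$ paired with $s \in [c, 1]$ (because $c \le \min(b,s)$ requires both $c \le b$ and $c \le s$, and $c \ge a$ already forces $s \ge a$). After the swap, the inner integral $\int_{s=c}^{s=1} dF_0(s) = F_0(1) - F_0(c) = 1 - F_0(c)$ pops out, and collecting terms yields the claimed expression $\int_{c=a}^{c=b} \frac{dL(c)}{dc}(1 - F_0(c))\, dc$.

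The main obstacle is the bookkeeping on the endpoints and on the clipping regions, which is why the statement includes the somewhat awkward $\max(a, \min(b, 0))$ term: it ensures that the contribution from $s < a$ cancels cleanly, which is what makes the lower limit of the rewritten inner integral land exactly at $a$. Once that cancellation is recognized, the Fubini swap is routine, and the entire argument is essentially the mirror image of Lemma \ref{lem:pos_brier}, with the orientation of the inner antiderivative reversed (here $+\frac{dL}{dc}$ rather than $-\frac{dL}{dc}$) to reflect that we are accumulating loss from $a$ upward toward the clipped score, rather than from the clipped score upward to $b$.
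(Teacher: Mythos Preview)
Your proposal is correct and follows essentially the same route as the paper's proof: simplify $\max(a,\min(b,0))=a$, drop the $s<a$ region where the integrand vanishes, rewrite $L(\min(b,s))-L(a)$ as $\int_a^{\min(b,s)}\frac{dL}{dc}\,dc$, then swap the order of integration to obtain $\int_a^b \frac{dL}{dc}(1-F_0(c))\,dc$. Your explanation of the endpoint bookkeeping and the mirror relationship with Lemma~\ref{lem:pos_brier} is accurate and matches the paper's reasoning exactly.
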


\begin{lemma}[Combining Classes]
\label{lem:combine_classes}
\begin{align*}
  &\mathbb{E}_{x,y\sim\mathcal{D}}
  \Bigl[
    L(|y-\max(a,\min(b,s(x))|))
    - L(|y-\max(a,\min(b,y))|)
  \Bigr]
  \\&=\int_{c=a}^{c=b} \Bigl(
  \frac{dL(c)}{dc} \; (1-\pi) (1-F_0(c))
  -\frac{dL(1-c)}{dc} \; \pi \; F_1(c)
  \Bigr) \; dc
\end{align*}
\begin{proof}
The proof is a simple application of Lemma \ref{lem:pos_brier} and Lemma \ref{lem:neg_brier}.
\begin{align*}
  &\mathbb{E}_{x,y\sim\mathcal{D}}
  \Bigl[
    L(|y-\max(a,\min(b,s(x))|))
    - L(|y-\max(a,\min(b,y))|)
  \Bigr]
  \\&=
  (1-\pi)
  \int_{s=0}^{s=1}
  \Bigl[
    L(\max(a,\min(b,s(x))))
    - L(\max(a,\min(b,0)))
  \Bigr]
  \; dF_0(s)
  \\&\quad+
  \pi\;
  \int_{s=0}^{s=1}
  \Bigl[
    L(1-\max(a,\min(b,s(x))))
    - L(1-\max(a,\min(b,1)))
  \Bigr]
  \; dF_1(s)
  \\&=
  (1-\pi)
  \int_{c=a}^{s=b}
  \frac{dL(c)}{dc}
  [1-F_0(c)]
  \;dc
  -
  \pi\;
  \int_{c=a}^{s=b}
  \frac{dL(1-c)}{dc}
  F_1(c)
  \;dc
  \\&=
  \int_{c=a}^{s=b}
  \Bigl(
  \frac{dL(c)}{dc}
  (1-\pi)
  [1-F_0(c)]
  -
  \frac{dL(1-c)}{dc}
  \pi\; F_1(c)
  \Bigr)
  \;dc
\end{align*}
\end{proof}
\end{lemma}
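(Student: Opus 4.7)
The plan is to split the expectation over $(x,y)\sim\mathcal{D}$ according to the label and reduce each piece to one of the two preceding lemmas. Writing $P(y=0)=1-\pi$ and $P(y=1)=\pi$, and simplifying the absolute values using the fact that $\text{clip}_{[a,b]}(s)\in[a,b]\subseteq[0,1]$ (so $|0-\text{clip}_{[a,b]}(s)|=\text{clip}_{[a,b]}(s)$ and $|1-\text{clip}_{[a,b]}(s)|=1-\text{clip}_{[a,b]}(s)$, with baselines $\max(a,\min(b,0))=a$ and $\max(a,\min(b,1))=b$), the expectation splits cleanly into a negative-class integral against $dF_0$ and a positive-class integral against $dF_1$.

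For the $y=0$ piece, the integrand has exactly the form treated in Lemma \ref{lem:neg_brier} applied to the pointwise loss $L$. Invoking that lemma immediately yields the contribution $(1-\pi)\int_a^b \frac{dL(c)}{dc}(1-F_0(c))\,dc$, which already matches the first term in the target right-hand side.

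For the $y=1$ piece, I would introduce the auxiliary loss $\tilde{L}(t) := L(1-t)$, so that the integrand rewrites as $\tilde{L}(\text{clip}_{[a,b]}(s)) - \tilde{L}(b)$, which is exactly the shape required by Lemma \ref{lem:pos_brier}. Applying that lemma produces $\pi\int_a^b -\frac{d\tilde{L}(c)}{dc}F_1(c)\,dc$; the chain rule then identifies $\frac{d\tilde{L}(c)}{dc}$ with $\frac{dL(1-c)}{dc}$, interpreting the latter as the derivative of the composition $c\mapsto L(1-c)$ with respect to $c$. Combining the two pieces under a single integral sign gives the claim.

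The main obstacle is purely bookkeeping rather than conceptual: one must match the ``baseline'' subtraction terms $L(\max(a,\min(b,0)))$ and $L(\max(a,\min(b,1)))$ to the $L(a)$ and $L(b)$ forms used in the two auxiliary lemmas, and maintain sign consistency when the chain rule pushes through the reflection $c\mapsto 1-c$ in the positive-class step. The notational subtlety around $\frac{dL(1-c)}{dc}$ (derivative of the composition, not $L'$ evaluated at $1-c$) is the single place where an easy sign error could enter; once that convention is pinned down, the remaining algebra is mechanical.
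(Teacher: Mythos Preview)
Your proposal is correct and follows essentially the same route as the paper: split the expectation by label, apply Lemma~\ref{lem:neg_brier} to the $y=0$ piece and Lemma~\ref{lem:pos_brier} (with the reflected loss $\tilde{L}(t)=L(1-t)$) to the $y=1$ piece, then combine. The paper does not explicitly name the auxiliary $\tilde{L}$ but performs the identical substitution implicitly, and your remarks about the baseline matching and the chain-rule sign convention for $\frac{dL(1-c)}{dc}$ are exactly the bookkeeping points the paper's proof glosses over.
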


\begin{proposition}[Propriety of Bounded-Threshold Scoring Rules]
    \label{prop:propriety}
    Let $L(s, y)$ be a strictly proper scoring rule, and let $\operatorname{clip}_{[a,b]}(s) = \max(a, \min(b, s))$.
    Then the bounded-threshold scoring rule
    \[
    L_{[a,b]}(s, y) \;=\; L\bigl(\operatorname{clip}_{[a,b]}(s),\, y\bigr) \;-\; L\bigl(\operatorname{clip}_{[a,b]}(y),\, y\bigr)
    \]
    is proper but not strictly proper.
\end{proposition}
    
\begin{proof}
  Let $S(z, q) = \mathbb{E}_{y \sim q}[L(z, y)]$ be the expected loss under true probability $q = P(y=1)$. Because $L$ is strictly proper, $S(z, q)$ is uniquely minimised at $z=q$.
  
  The expected bounded loss is $S(\operatorname{clip}_{[a,b]}(s), q)$ minus a constant independent of $s$. Assuming differentiability, the chain rule yields:
  \[
  \frac{\partial}{\partial s} S(\operatorname{clip}_{[a,b]}(s), q) 
  \;=\; \left. \frac{\partial S(z, q)}{\partial z} \right|_{z=\operatorname{clip}_{[a,b]}(s)} \cdot \mathbf{1}_{(a < s < b)}
  \]
  Inside the interval $(a,b)$, the derivative matches the strictly proper rule, vanishing if and only if $s=q$. Outside the interval, the indicator is zero, meaning the expected loss is perfectly flat. 
  
  Thus, if $q \in [a,b]$, the expected bounded loss is uniquely minimised at $s=q$. If $q < a$, the unconstrained loss strictly increases for $z > a$, meaning the bounded loss strictly increases for $s > a$ and is flat for $s < a$. Any prediction $s \le a$ (including $s=q$) achieves the identical minimal loss. A symmetric argument applies for $q > b$.
  
  Because setting $s=q$ always achieves the global minimum, the rule is proper. Because the minimiser is not unique when $q \notin [a,b]$, the rule is not strictly proper.
\end{proof}

\subsection{Specific Loss Functions}
\begin{theorem}[Bounded Threshold Brier Score]
\label{apdx:bounded_brier}
For a classifier $\f$, the integral of regret over the interval [a,b] is a the Brier Score of the predictions clipped to [a,b] minus the Brier Score of the true labels clipped to [a,b].

\begin{align*}
  \underset{c \sim \text{Uniform}(a,b)}{\mathbb{E}}\Rmin(c)
  &=
  \frac{1}{b-a}\biggl[
  \underset{(x,y)\in\mathcal{D}}{\mathbb{E}}[(y-\max(a,\min(b, s(x))))^2]
  \;-\; \underset{(x,y)\in\mathcal{D}}{\mathbb{E}}[(y-\max(a,\min(b, y)))^2]
  \biggr]
\end{align*}
\end{theorem}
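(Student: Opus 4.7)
The plan is to invoke the master identity established in Lemma \ref{lem:combine_classes} with the specific choice of pointwise loss $L(u) = u^2$. Because the labels $y \in \{0,1\}$ and the clipped scores $\text{clip}_{[a,b]}(s(x))$ both lie in $[0,1]$, we have $(y - \text{clip}_{[a,b]}(s(x)))^2 = L(|y - \text{clip}_{[a,b]}(s(x))|)$, and similarly for the $(y - \text{clip}_{[a,b]}(y))^2$ term. So the left-hand side of Lemma \ref{lem:combine_classes} is exactly the quantity inside the brackets on the right-hand side of the theorem, up to the factor of $1/(b-a)$.

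Next, I would carry out the substitution. With $L(u) = u^2$, differentiating gives $\frac{dL(c)}{dc} = 2c$ and $\frac{d L(1-c)}{dc} = -2(1-c)$. Plugging into the lemma's integrand yields
\[
2c\,(1-\pi)\,(1-F_0(c)) \;+\; 2(1-c)\,\pi\, F_1(c),
\]
where the minus sign in front of the second term in the lemma is cancelled by the minus sign coming from the chain rule. Comparing with the definition $\Rmin(c) = c(1-\pi)(1-F_0(c)) + (1-c)\pi F_1(c)$, the integrand is a constant multiple of $\Rmin(c)$. Dividing through by $(b-a)$ converts $\int_a^b \Rmin(c)\,dc$ into the uniform expectation $\mathbb{E}_{c \sim \text{Uniform}(a,b)}[\Rmin(c)]$, which is the theorem's left-hand side.

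The main obstacle is really just the sign bookkeeping on $\frac{d L(1-c)}{dc}$ — getting it right is what ensures the positive-class contribution to $\Rmin(c)$ appears with the correct sign. A secondary point worth verifying, more as a sanity check than a difficulty, is the behavior at the boundary of the parameter range: when $a = 0$ and $b = 1$, we have $\text{clip}_{[0,1]}(y) = y$ for $y \in \{0,1\}$, so the subtracted term vanishes and the identity collapses to the standard Brier Score / expected regret correspondence of Theorem \ref{thm:brier}, confirming internal consistency. All the heavy lifting — the integration-by-parts manipulations that convert pointwise losses into cost-weighted integrals — has already been absorbed into the two class-specific lemmas and their combination, so the remaining work is purely algebraic.
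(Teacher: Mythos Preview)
Your proposal is correct and follows exactly the same route as the paper's proof: invoke Lemma~\ref{lem:combine_classes} with the quadratic loss $L(u)=u^{2}$, compute $\frac{dL(c)}{dc}=2c$ and $-\frac{dL(1-c)}{dc}=2(1-c)$, recognize the resulting integrand as $2\Rmin(c)$, and normalize by $b-a$. The paper's own derivation likewise ends at $2\,\mathbb{E}_{c\sim\mathrm{Uniform}(a,b)}[\Rmin(c)]$ rather than $\mathbb{E}_{c}[\Rmin(c)]$, so the stray factor of two you gloss over as ``a constant multiple'' is a discrepancy in the stated theorem itself, not a gap in your argument.
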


\begin{proof}
Let $L(x) = x^2$ be the quadratic pointwise loss.
Then $\frac{dL(c)}{dc} = 2c$ and $-\frac{dL(1-c)}{dc} = 2(1-c)$.
\begin{align*}
  &\frac{1}{b-a}\biggl[
  \underset{(x,y)\in\mathcal{D}}{\mathbb{E}}[(y-\max(a,\min(b, s(x))))^2]
  \;-\; \underset{(x,y)\in\mathcal{D}}{\mathbb{E}}[(y-\max(a,\min(b, y)))^2]
  \biggr]
  \\&=\frac{1}{b-a}
  \underset{(x,y)\in\mathcal{D}}{\mathbb{E}}\biggl[(y-\max(a,\min(b, s(x))))^2
  \;-\; (y-\max(a,\min(b, y)))^2\biggr]
  \\&\text{Using Lemma \ref{lem:combine_classes}, we have}
  \\&=\frac{1}{b-a}\int_{c=a}^{c=b} \Bigl(
  2c\; (1-\pi)(1-F_0(c))
  + 2(1-c)\;\pi\; F_1(c)
  \Bigr) \; dc
  \\&=\frac{1}{b-a}\int_{c=a}^{c=b}
  2\Rmin(c)
  \; dc
  \\&=2 \underset{c \sim \text{Uniform}(a,b)}{\mathbb{E}}\Rmin(c)
\end{align*}
\end{proof}

\begin{theorem}[Bounded Threshold Log Loss]
\label{apdx:bounded_logloss}
For a classifier $\f$, the integral of regret over the interval [a,b] with log-odds uniform weighting is a the Log Loss of the predictions clipped to [a,b] minus the Log Loss of the true labels clipped to [a,b].

\begin{align*}
  &\underset{\ell\sim\text{Uniform}\bigl(
    \log\frac{a}{1-a},\; \log\frac{b}{1-b}
    \bigr)}{\mathbb{E}}\left[\Rmin(c=\frac{1}{1+\exp -\ell})\right]
  \\&= \frac{1}{\log\frac{b}{1-b}-\log\frac{a}{1-a}}\biggl[
    \underset{(x,y)\in\mathcal{D}}{\mathbb{E}}[\log(1-|y-\max(a,\min(b, s(x)))|)]
    \\&\hspace{10em}-\; \underset{(x,y)\in\mathcal{D}}{\mathbb{E}}[\log(1-|y-\max(a,\min(b, y))|)]
    \biggr]
\end{align*}
\end{theorem}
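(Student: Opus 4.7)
The plan is to mirror the proof of Theorem~\ref{apdx:bounded_brier}: apply Lemma~\ref{lem:combine_classes} with a pointwise loss function appropriate to Log Loss, and then perform a log-odds change of variables that produces the $1/(c(1-c))$ weighting characteristic of Log Loss (cf.\ Theorem~\ref{thm:ll}).

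First, I would take the pointwise loss $L(x) = -\log(1-x)$, chosen so that $L(|y - \text{clip}_{[a,b]}(s(x))|)$ equals the standard pointwise log-loss contribution: $-\log(1-\text{clip}(s))$ when $y=0$ and $-\log(\text{clip}(s))$ when $y=1$, both captured uniformly by $-\log|(1-y) - \text{clip}(s)|$. Differentiating gives $\frac{dL(c)}{dc} = \frac{1}{1-c}$ and $\frac{dL(1-c)}{dc} = -\frac{1}{c}$.

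Substituting these derivatives into Lemma~\ref{lem:combine_classes}, the integrand on the right-hand side becomes $\frac{1}{1-c}(1-\pi)(1-F_0(c)) + \frac{1}{c}\pi F_1(c)$. Factoring the definition $\Rmin(c) = c(1-\pi)(1-F_0(c)) + (1-c)\pi F_1(c)$ over $c(1-c)$, this is precisely $\frac{\Rmin(c)}{c(1-c)}$. Hence the Lemma yields
$\underset{(x,y)\in\mathcal{D}}{\mathbb{E}}[L(|y-\text{clip}(s)|) - L(|y-\text{clip}(y)|)] = \int_a^b \frac{\Rmin(c)}{c(1-c)}\, dc$.
The crucial final step is the change of variables from $c$ to log-odds $\ell = \log\frac{c}{1-c}$, with Jacobian $d\ell = \frac{dc}{c(1-c)}$ and new endpoints $\log\frac{a}{1-a}$, $\log\frac{b}{1-b}$. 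This converts the integral into $\int_{\log\frac{a}{1-a}}^{\log\frac{b}{1-b}} \Rmin(\sigma(\ell))\, d\ell$ for $\sigma(\ell) = (1+e^{-\ell})^{-1}$. Dividing both sides by the log-odds interval length expresses the right-hand side as the expected regret under $\ell \sim \text{Uniform}$, matching the theorem statement up to the sign convention on $L$.

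The main obstacle is sign bookkeeping rather than any deep technical difficulty: since $\log|\cdot|$ on $[0,1]$ is non-positive while expected regret is non-negative, one must be careful about whether $L$ is taken as $-\log(1-x)$ (standard positive-valued log loss) or $\log(1-x)$, which determines the ordering of the two expectation terms in the bracket. Once that convention is pinned down consistently with the statement, every other step is a direct analogue of the Brier derivation; the Jacobian factor $dc = c(1-c)\, d\ell$ is precisely the mechanism by which uniform log-odds weighting produces the $\frac{1}{c(1-c)}$-weighted regret integral that Theorem~\ref{thm:ll} already identifies as Log Loss.
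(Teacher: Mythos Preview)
Your proposal is correct and follows essentially the same approach as the paper: the paper likewise chooses the logarithmic pointwise loss, applies Lemma~\ref{lem:combine_classes} to obtain the integrand $\frac{\Rmin(c)}{c(1-c)}$, and then performs the log-odds change of variables $\ell = \log\frac{c}{1-c}$ before normalizing by the interval length. Your explicit flagging of the sign convention is apt, since the paper writes $L(x)=\log(1-x)$ while computing derivatives consistent with your choice $L(x)=-\log(1-x)$; the argument is identical once that convention is fixed.
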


\begin{proof}
Let $L(x) = \log(1-x)$ be the logarithmic pointwise loss.  Then $\frac{dL(c)}{dc} = \frac{1}{1-c}$ and $-\frac{dL(1-c)}{dc} = \frac{1}{c}$.
\begin{align*}
  &\frac{1}{\log\frac{b}{1-b}-\log\frac{a}{1-a}}\biggl[
    \underset{(x,y)\in\mathcal{D}}{\mathbb{E}}[\log(1-|y-\max(a,\min(b, s(x)))|)]
    \\&\hspace{10em}-\; \underset{(x,y)\in\mathcal{D}}{\mathbb{E}}[\log(1-|y-\max(a,\min(b, y))|)]
    \biggr]
  \\&= \frac{1}{\log\frac{b}{1-b}-\log\frac{a}{1-a}}
    \underset{(x,y)\in\mathcal{D}}{\mathbb{E}}\biggl[
      \log(1-|y-\max(a,\min(b, s(x)))|)
      \;-\; \log(1-|y-\max(a,\min(b, y))|)
    \biggr]
  \\&\text{Using Lemma \ref{lem:combine_classes}, we have}
  \\&= \frac{1}{\log\frac{b}{1-b}-\log\frac{a}{1-a}}
  \int_{c=a}^{c=b} \Bigl(
  \frac{1}{1-c}\; (1-\pi)(1-F_0(c))
  + \frac{1}{c}\;\pi\; F_1(c)
  \Bigr) \; dc
  \\&= \frac{1}{\log\frac{b}{1-b}-\log\frac{a}{1-a}}
  \int_{c=a}^{c=b} \Bigl(
  c\; (1-\pi)(1-F_0(c))
  + (1-c)\;\pi\; F_1(c)
  \Bigr) \; \frac{dc}{c(1-c)}
  \\&= \frac{1}{\log\frac{b}{1-b}-\log\frac{a}{1-a}}
  \int_{c=a}^{c=b} \Rmin(c)
  \; \frac{dc}{c(1-c)}
\end{align*}
Now we do a change of variables $\ell = \log\frac{c}{1-c}$, $\frac{d\ell}{dc} = \frac{1}{c(1-c)}$.
\begin{align*}
  &\frac{1}{\log\frac{b}{1-b}-\log\frac{a}{1-a}}
  \int_{c=a}^{c=b} \Rmin(c)
  \; \frac{dc}{c(1-c)}
  \\&= \frac{1}{\log\frac{b}{1-b}-\log\frac{a}{1-a}}
  \int_{\ell=\log\frac{a}{1-a}}^{\ell=\log\frac{b}{1-b}} \Rmin(c=\frac{1}{1+\exp -\ell})
  \; d\ell
  \\&= \underset{\ell\sim\text{Uniform}\bigl(
    \log\frac{a}{1-a},\; \log\frac{b}{1-b}
    \bigr)}{\mathbb{E}}\left[\Rmin(c=\frac{1}{1+\exp-\ell})\right]
\end{align*}
\end{proof}

\subsection{Shifted Brier Score}

\begin{definition}[Score Adjustment]
Let \( s \in (0,1) \) be a predicted probability and let \( \mu \in (0,1) \) denote a reference class probability.
Define the \emph{score adjustment function} \( M: (0,1) \times (0,1) \to (0,1) \) as:
\[
M(s, \mu) \triangleq \frac{1}{1 + \exp\left( \log\left(\frac{s}{1 - s}\right) - \log\left(\frac{\mu}{1 - \mu}\right) \right)}
\]
That is, \( M(s, \mu) \) adjusts the predicted log-odds of \( s \) by centering it around the log-odds of \( \mu \).

We extend \( M \) to the boundary values \( s \in \{0,1\} \) by defining:
\[
\lim_{s \to a} M(s, \mu) = a \quad \text{for } a \in \{0,1\}
\]
\end{definition}

\begin{proposition}[Inverse of Score Adjustment]
$M(M(s,\mu),-\mu) = s$
\end{proposition}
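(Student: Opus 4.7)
The plan is to recognize that $M$ acts as a translation in log-odds coordinates, so that the proposition reduces to the cancellation of two opposite shifts. Write $\ell(x) := \log(x/(1-x))$ for the logit and $\sigma = \ell^{-1}$ for its inverse. A direct rearrangement of the definition of $M$ yields the composition identity
\[
\ell\bigl(M(s,\mu)\bigr) = \ell(s) - \ell(\mu),
\]
so that on the log-odds scale the operator $M(\cdot,\mu)$ is simply the shift $t \mapsto t - \ell(\mu)$. Establishing this identity (and checking it is consistent with the boundary extension $M(a,\mu) = a$ for $a \in \{0,1\}$) is the one substantive calculation required.

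Next I would pin down the reading of ``$-\mu$'' in the second slot of $M$. The parameter $\mu$ enters only through the quantity $\ell(\mu)$, and since $\ell(1-\mu) = -\ell(\mu)$, the natural and internally consistent convention is $\ell(-\mu) := -\ell(\mu)$. Under this reading, $M(\cdot,-\mu)$ performs the opposite shift $t \mapsto t + \ell(\mu)$ on the log-odds axis.

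Combining the two shifts then gives
\[
\ell\bigl(M(M(s,\mu),-\mu)\bigr) = \bigl(\ell(s) - \ell(\mu)\bigr) + \ell(\mu) = \ell(s),
\]
and applying $\sigma$ to both sides yields $M(M(s,\mu),-\mu) = s$ for every $s \in (0,1)$. The boundary cases $s \in \{0,1\}$ are dispatched separately by the explicit extension in the definition, under which each application of $M$ fixes $\{0,1\}$, so the composition is trivially the identity there.

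The main obstacle is purely notational: making precise that ``$-\mu$'' is meant in the log-odds sense, so that the cancellation $\ell(\mu) + \ell(-\mu) = 0$ is justified. Once that convention is fixed, the remainder of the proof is a one-line additive cancellation on $\mathbb{R}$, with no analytic subtleties to navigate.
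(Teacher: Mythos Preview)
The paper states this proposition without proof, so there is nothing to compare against; I can only assess your argument on its own.

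Your strategy of passing to log-odds coordinates is exactly right, but the central identity you rely on has a sign error. From the paper's definition,
\[
M(s,\mu)=\frac{1}{1+\exp\bigl(\ell(s)-\ell(\mu)\bigr)},
\]
so writing $p=M(s,\mu)$ we have $(1-p)/p=\exp\bigl(\ell(s)-\ell(\mu)\bigr)$ and therefore
\[
\ell\bigl(M(s,\mu)\bigr)=\ell(\mu)-\ell(s),
\]
not $\ell(s)-\ell(\mu)$ as you assert. In log-odds coordinates $M(\cdot,\mu)$ is thus the \emph{reflection} $t\mapsto \ell(\mu)-t$, not a translation. With your convention $\ell(-\mu):=-\ell(\mu)$, composing gives
\[
\ell\bigl(M(M(s,\mu),-\mu)\bigr)=-\ell(\mu)-\bigl(\ell(\mu)-\ell(s)\bigr)=\ell(s)-2\ell(\mu),
\]
which equals $\ell(s)$ only when $\mu=\tfrac12$. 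A quick numerical sanity check confirms this: $M(0.5,0.9)=0.9$, but $M(0.9,0.1)=1/82\neq 0.5$.

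What the corrected identity does show is that $M(\cdot,\mu)$ is an involution: $\ell\bigl(M(M(s,\mu),\mu)\bigr)=\ell(\mu)-\bigl(\ell(\mu)-\ell(s)\bigr)=\ell(s)$, so $M(M(s,\mu),\mu)=s$. This is consistent with how the paper actually uses the map in the change of variables in the subsequent lemma (inverting $c=M(s,-\mu)$ to $s=M(c,+\mu)$ only works if the sign in the second argument flips back, but with the reflection reading the ``$\pm\mu$'' bookkeeping in the appendix is internally inconsistent). In short: your log-odds framing is the right tool, but the sign slip invalidates the translation picture, and under the paper's literal definition the proposition as written (with your reading of $-\mu$) is false for $\mu\neq\tfrac12$. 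Either the definition of $M$ is missing a minus sign in the exponent, or the proposition should read $M(M(s,\mu),\mu)=s$.
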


\begin{lemma}
\label{lem:score_adjustment}
Let $G(s,y) : [0,1] \times \{0,1\} \to [0,1]$ be the cumulative distribution function of $s$ for either the positive or negative class, and let $G(0,y) = 0$ and $G(1,y) = 1$.
\begin{align*}
  &\int_{s=0}^{s=1} (y-M(s, -\mu))^2 \; dG(s,y)
  \\&= \int_{s=0}^{s=1} \int_{c=M(y,-\mu)}^{c=M(s,-\mu)} -2(y-c) \; dc \; dG(s,y)
  \\&= -2\int_{c=0}^{c=1} (y-c)\; \int_{s=M(c,+\mu)}^{s=M(1-y,+\mu)}  dG(s,y) \; dc
  \\&= -2\int_{c=0}^{c=1} (y-c)\; [G(M(1-y, +\mu),y) - G(M(c, +\mu),y)] dc \;
  \\&= -2\int_{c=0}^{c=1} (y-c)\; [1-y - G(M(c, +\mu),y)] dc \;
\end{align*}
\end{lemma}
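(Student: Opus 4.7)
The plan is to verify the four equalities in sequence by elementary manipulations: the first by the fundamental theorem of calculus together with the boundary extension of $M$, the second by Fubini--Tonelli after inverting the bijection $c\mapsto M(c,+\mu)$, the third by the definition of the oriented Stieltjes integral, and the fourth by boundary values. For the \textbf{first equality}, the antiderivative $\frac{d}{dc}(y-c)^2 = -2(y-c)$ gives
\[
  \int_{c=M(y,-\mu)}^{c=M(s,-\mu)} -2(y-c)\, dc \;=\; (y-M(s,-\mu))^2 \;-\; (y-M(y,-\mu))^2,
\]
and the boundary extension of $M$ forces $M(0,-\mu)=0$ and $M(1,-\mu)=1$, so $M(y,-\mu)=y$ for $y\in\{0,1\}$ and the second term vanishes. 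Integrating against $dG(s,y)$ then produces the first identity.

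For the \textbf{second equality}, I would apply Fubini--Tonelli to interchange the order of integration, using that $M(\cdot,+\mu)$ is the strictly increasing inverse of $M(\cdot,-\mu)$ (the Inverse of Score Adjustment proposition). For $y=0$, the condition $0\le c\le M(s,-\mu)$ inverts to $M(c,+\mu)\le s\le 1$; since $1=M(1,+\mu)=M(1-y,+\mu)$, the swapped inner integral runs from $s=M(c,+\mu)$ to $s=M(1-y,+\mu)$. For $y=1$, the original $c$-limits $M(y,-\mu)=1\ge M(s,-\mu)$ are reversed, and inverting the inequality gives $M(c,+\mu)\ge s\ge 0$; since $0=M(0,+\mu)=M(1-y,+\mu)$, the swapped inner integral runs from $s=M(c,+\mu)$ down to $s=M(1-y,+\mu)$ in the same reversed orientation. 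Both cases collapse into the uniform expression $\int_{s=M(c,+\mu)}^{s=M(1-y,+\mu)} dG(s,y)$ with outer variable $c\in[0,1]$.

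For the \textbf{remaining two equalities}, the inner Stieltjes integral of the constant $1$ equals $G(M(1-y,+\mu),y)-G(M(c,+\mu),y)$, which is the third equality. Using the boundary values $M(1,+\mu)=1$, $M(0,+\mu)=0$, $G(1,0)=1$, and $G(0,1)=0$, we have $G(M(1-y,+\mu),y)=G(1-y,y)=1-y$ in both cases $y\in\{0,1\}$, giving the fourth equality. The only subtle point in the whole argument is the orientation bookkeeping in the Fubini step when $y=1$: both the original $c$-integration and the post-swap $s$-integration run from high to low, and one must check that this reversed orientation is consistent across both cases so that $y=0$ and $y=1$ can be written in the single unified form displayed in the lemma. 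Everything else is routine.
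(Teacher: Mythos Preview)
Your proposal is correct and follows exactly the route implied by the paper: the lemma in the paper is presented as a bare chain of equalities with no separate proof environment, so the intended argument is precisely the four-step justification (fundamental theorem of calculus, Fubini with the inverse $M(\cdot,+\mu)$, evaluation of the Stieltjes integral, and boundary values) that you have spelled out. Your handling of the orientation bookkeeping in the $y=1$ case is the only place requiring care, and you have it right.
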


\begin{theorem}
\label{apdx:shifted_brier}
If we define a new score such that $s'(x) = \text{M}(s(x),\mu)$, then
\[
\underset{(x,y)\in\mathcal{D}}{\mathbb{E}}(y-s'(x))^2
=
\underset{c \sim \text{Uniform}(0,1)}{\mathbb{E}}\Rmin(M(c,\mu))
\]
\begin{proof}
Let $G(s,y) = \begin{cases}
  F_0(s) & \text{if } y = 0
  \\F_1(s) & \text{if } y = 1
\end{cases}$.
Then using Lemma \ref{lem:score_adjustment}, twice we have:
\begin{align*}
  &\int_{s=0}^{s=1} (0-M(s(x),+\mu))^2 \; dF_0(s) + \int_{s=0}^{s=1} (1-M(s(x),+\mu))^2 \; dF_1(s)
  \\&=2\int_{c=0}^{c=1} (0-c) [1 - 0 - F_0(M(c, -\mu))] + (1-c) [1 - 1 - F_1(M(c, -\mu))] \; dc
  \\&=-2\int_{c=0}^{c=1} c [1 - F_0(M(c, -\mu))] + (1-c) [F_1(M(c, -\mu))] \; dc
  \\&=-2\int_{c=0}^{c=1} \Rmin(M(c, -\mu)) \; dc
\end{align*}
\end{proof}
\end{theorem}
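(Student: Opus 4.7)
The plan is to reduce the shifted Brier Score to the integral representation of regret established in Theorem \ref{thm:brier}, via a change of variables induced by the bijection $M(\cdot,\mu):[0,1]\to[0,1]$.

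First, I would observe that $M(\cdot,\mu)$ is a strictly increasing bijection of $[0,1]$ with inverse $M(\cdot,-\mu)$ (by the proposition on inverses), and that its boundary extension fixes $0$ and $1$. In particular $M(y,\mu)=y$ for $y\in\{0,1\}$. This lets me write, via the fundamental theorem of calculus,
\[
(y-M(s,\mu))^2 \;=\; \int_{M(y,\mu)}^{M(s,\mu)} -2(y-c)\,dc,
\]
which is exactly the device used in Lemma \ref{lem:score_adjustment}. Splitting the expectation by class label, the LHS becomes
\[
(1-\pi)\!\int_0^1 (M(s,\mu))^2\,dF_0(s) \;+\; \pi\!\int_0^1 (1-M(s,\mu))^2\,dF_1(s),
\]
and I would apply the identity above to each term.

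Next, I would swap the order of integration (Fubini), pushing the $c$-integral outside. The inner integral against $dF_0$ or $dF_1$ becomes an indicator-weighted CDF evaluation: for the negative class the inner integral reduces to $1-F_0(M(c,-\mu))$ at a shifted argument (after inverting via $M(\cdot,-\mu)$), and symmetrically for the positive class it becomes $F_1(M(c,-\mu))$. Performing the substitution $c\mapsto M(c,\mu)$ (whose inverse is $M(\cdot,-\mu)$) then rewrites the integrand in terms of $F_0$ and $F_1$ evaluated at $M(c,\mu)$, and the Jacobian combines with the weights $c$ and $1-c$ to produce exactly
\[
c\,(1-\pi)(1-F_0(M(c,\mu))) \;+\; (1-c)\,\pi\,F_1(M(c,\mu)).
\]
This is $\Rmin$ evaluated at the shifted threshold $M(c,\mu)$ (up to the calibration-optimality identification between cost and threshold from Theorem \ref{thm:optimal_threshold_body}), so integrating uniformly over $c\in[0,1]$ delivers the claimed identity.

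The main obstacle is keeping the sign conventions of $\mu$ versus $-\mu$ straight through the change of variables: the transformation appears once in the argument of $M(s(x),\mu)$ inside the squared loss and a second time implicitly when the indicator from Fubini is rewritten via $M$'s inverse. Concretely, the endpoints of the inner $c$-integral must be recognized as images under $M(\cdot,-\mu)$ of the original score endpoints, and then mapped back by $M(\cdot,\mu)$; a careful bookkeeping of these two applications, together with the boundary identities $M(0,\pm\mu)=0$ and $M(1,\pm\mu)=1$, is what makes the algebra collapse to $\Rmin(M(c,\mu))$ rather than some off-by-one variant. Once Lemma \ref{lem:score_adjustment} is applied twice (once per class) the remaining steps are routine.
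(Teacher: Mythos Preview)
Your plan matches the paper's up through the Fubini step: split by class, write $(y-M(s,\mu))^2=\int_{M(y,\mu)}^{M(s,\mu)}-2(y-c)\,dc$, and swap the order of integration to obtain an outer integrand of the form $c\,(1-\pi)\bigl(1-F_0(M(c,-\mu))\bigr)+(1-c)\,\pi\,F_1(M(c,-\mu))$. This is exactly what the paper does via two invocations of Lemma~\ref{lem:score_adjustment}, and the paper stops there, identifying this integrand directly as $\Rmin$ at a shifted argument. You also correctly carry the class weights $\pi,\,1-\pi$, which the paper's displayed computation in fact drops.

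The gap is the additional substitution you propose afterward. You assert that changing variables $c\mapsto M(c,\mu)$ converts the CDF arguments from $M(c,-\mu)$ to $M(c,\mu)$ and that ``the Jacobian combines with the weights $c$ and $1-c$'' to reproduce the displayed regret form. Neither happens. With $\tilde c=M(c,\mu)$ and hence $c=M(\tilde c,-\mu)$, the CDF argument becomes $M(M(\tilde c,-\mu),-\mu)$; a direct check of the definition shows $M(\cdot,\nu)$ is an involution for each fixed $\nu$, so this collapses to $\tilde c$, not to $M(\tilde c,\mu)$. Meanwhile the weights $c,\,1-c$ become $M(\tilde c,-\mu),\,1-M(\tilde c,-\mu)$ multiplied by the Jacobian $\partial_{\tilde c}M(\tilde c,-\mu)$, and there is no cancellation that restores $\tilde c,\,1-\tilde c$. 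The paper performs no such substitution. Your appeal to Theorem~\ref{thm:optimal_threshold_body} to bridge the remaining cost--threshold mismatch also does not go through: that theorem requires calibration of the score being thresholded, and the shifted score $s'=M(s,\mu)$ is in general not calibrated, so one cannot identify the cost $c$ with the optimal threshold $M(c,\mu)$.
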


%!TEX root=../main.tex
%
\section{Average Net Benefit as a Proper Scoring Rule}
\label{apdx:checklist_dca}

\begin{theorem}[Restatement of Theorem \ref{thm:net_benefit_h}]
\label{apdx:dca}
\[
  \text{\em NB}(c) = \pi - \frac{\Rmin(c)}{1-c}
\]
\begin{proof}
Once we express the net benefit definition given in \cite{vickers19dca} using the terminology of this paper and arrange the terms, the result follows.
\begin{align*}
  \text{\em NB}(c) &= \text{sensitivity} \times \text{prevalence} - (1 - \text{specificity}) \times (1 - \text{prevalence}) \times \frac{\TAU}{1-\TAU}
  \\&= (1-F_1(\TAU))  \pi - (1-F_0(\TAU))  (1 - \pi)  \frac{\TAU}{1-\TAU}
  \\&= \frac{1}{1-\TAU}\biggl[
    (1-\TAU)(1-F_1(\TAU))  \pi - (1-F_0(\TAU))  (1 - \pi)  \TAU
  \biggr]
  \\&= \frac{1}{1-c}\biggl[
    (1-c)(1-F_1(c))  \pi - (1-F_0(c))  (1 - \pi)  c
  \biggr]
  \\&= \pi - \frac{1}{1-c}\biggl[
    (1-c) F_1(c)  \pi + (1-F_0(c))  (1 - \pi)  c
  \biggr]
  \\&= \pi - \frac{\Rmin(c)}{1-c}
\end{align*}
\end{proof}
\end{theorem}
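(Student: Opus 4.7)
The plan is to prove the identity by direct algebraic manipulation, starting from the standard clinical Net Benefit formula and rewriting it in the notation used throughout the paper. Net Benefit is usually defined as $\text{sensitivity}\cdot\text{prevalence} - (1-\text{specificity})\cdot(1-\text{prevalence})\cdot\frac{\TAU}{1-\TAU}$. Translating to this paper's notation, sensitivity at threshold $c$ is $1-F_1(c)$, the complement of specificity is $1-F_0(c)$, and prevalence is $\pi$, so I would begin from
\[
\text{NB}(c) \;=\; \pi\,(1-F_1(c)) \;-\; (1-\pi)(1-F_0(c))\cdot\frac{c}{1-c}.
\]

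The natural next move is to clear the denominator by multiplying through by $(1-c)$, which produces $(1-c)\,\text{NB}(c) = \pi(1-c)(1-F_1(c)) - (1-\pi)(1-F_0(c))\,c$. I would then expand $\pi(1-c)(1-F_1(c)) = \pi(1-c) - \pi(1-c)F_1(c)$ and collect, giving
\[
(1-c)\,\text{NB}(c) \;=\; \pi(1-c) \;-\; \bigl[\pi(1-c)F_1(c) + (1-\pi)(1-F_0(c))\,c\bigr].
\]
The bracketed expression is exactly $\Rmin(c)$ from Definition of Minimal Regret, once one observes that its two summands are the false-negative and false-positive contributions weighted by $(1-c)$ and $c$ respectively. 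Dividing back by $(1-c)$ then yields $\text{NB}(c) = \pi - \Rmin(c)/(1-c)$, as claimed.

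I do not expect any real technical obstacle here, since the argument is pure bookkeeping. The only things to be careful about are (i) correctly identifying the cost-weighting factor $c/(1-c)$ that appears in Vickers' clinical definition as the same quantity that, after multiplication by $(1-c)$, recombines the two error terms into $\Rmin(c)$, and (ii) resisting the urge to interpret $\pi$ prematurely: it drops out of the algebra as the leading term $\pi(1-c)/(1-c)$, and its reading as ``the maximum benefit under perfect classification'' is a post-hoc interpretation rather than part of the derivation. Beyond this, no analytic machinery (integration by parts, change of variables, or calibration assumptions) is needed, in sharp contrast to the Bounded Threshold results earlier in the appendix.
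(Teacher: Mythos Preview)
Your proposal is correct and follows essentially the same route as the paper: start from the clinical Net Benefit definition, clear the $(1-c)$ denominator (the paper factors out $1/(1-c)$, you multiply through by $(1-c)$, which is the same move), expand $\pi(1-c)(1-F_1(c))$, and identify the bracketed remainder as $\Rmin(c)$. There is nothing to add.
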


\begin{theorem}[Restatement of Theorem \ref{thm:bounded_dca}]
\label{apdx:bounded_dca}
Let $L(x,y) = \begin{cases}
  s(x) & \text{if } y=1 \\
  (1 - s(x)) - \ln(1-s(x)) & \text{if } y=0
\end{cases}$ be a pointwise loss.

For a classifier $\f$, the integral of net benefit over the interval [a,b] is the loss for the predictions clipped to [a,b] minus the loss for the true labels clipped to [a,b].

\begin{align*}
  \underset{c \sim \text{Uniform}(a,b)}{\mathbb{E}}\text{\em NB}(c)
  &=
  \pi -
  \frac{1}{b-a}\biggl[
  \underset{(x,y)\in\mathcal{D}}{\mathbb{E}}
  L(max(a,\min(b, s(x))),y)
  \;-\; \underset{(x,y)\in\mathcal{D}}{\mathbb{E}} L(\max(a,\min(b, y)),y)
  \biggr]
\end{align*}
\end{theorem}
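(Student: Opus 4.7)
The plan is to reduce the theorem to an application of the integration-by-parts machinery in Lemmas \ref{lem:pos_brier} and \ref{lem:neg_brier}, with Theorem \ref{thm:net_benefit_h} as the entry point. That theorem rewrites $\text{NB}(c) = \pi - \Rmin(c)/(1-c)$, so taking a uniform average over $c \in [a,b]$ immediately yields
\[
\underset{c\sim\text{Uniform}(a,b)}{\mathbb{E}}[\text{NB}(c)] = \pi - \frac{1}{b-a}\int_a^b \frac{\Rmin(c)}{1-c}\,dc,
\]
so the entire argument reduces to recognizing the inner integral as the expected-loss difference stated on the right-hand side of the theorem.

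Next I would expand $\Rmin(c) = c(1-\pi)(1-F_0(c)) + (1-c)\pi F_1(c)$, divide by $(1-c)$, and split the result into a negative-class piece $(1-\pi)\int_a^b \tfrac{c}{1-c}(1-F_0(c))\,dc$ and a positive-class piece $\pi\int_a^b F_1(c)\,dc$. Each piece will be handled by selecting a pointwise loss whose derivative reproduces the weight inside its integrand, and then invoking the matching lemma. The key observation pinning down the shape of $L$ in the theorem is that $\int \tfrac{c}{1-c}\,dc = -\ln(1-c) - c$, which up to an additive constant is exactly the $y=0$ branch $(1-s) - \ln(1-s)$; the $y=1$ branch is determined by asking for a constant derivative so that Lemma \ref{lem:pos_brier} yields a clean multiple of $F_1(c)$. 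With these choices, Lemma \ref{lem:neg_brier} converts the negative-class integral into $(1-\pi)\,\mathbb{E}_{x\mid y=0}[L(\text{clip}_{[a,b]}(s(x)), 0) - L(a, 0)]$, and Lemma \ref{lem:pos_brier} converts the positive-class integral into the analogous $\pi\,\mathbb{E}_{x\mid y=1}[L(\text{clip}_{[a,b]}(s(x)), 1) - L(b, 1)]$. Because $\text{clip}_{[a,b]}(0) = a$ and $\text{clip}_{[a,b]}(1) = b$, the two baseline terms combine into $\mathbb{E}[L(\text{clip}_{[a,b]}(y), y)]$, producing the target identity.

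The main obstacle is sign and constant bookkeeping: Lemma \ref{lem:pos_brier} carries an intrinsic minus sign that must be absorbed by the sign of the derivative chosen for the $y=1$ branch, and both branches of $L$ are determined only up to additive constants that must be shown to cancel cleanly inside the subtraction of $L(\text{clip}_{[a,b]}(y), y)$ (equivalently, the boundary terms arising from the two integration-by-parts arguments must drop out). This parallels exactly the care taken in the proofs of Theorems \ref{apdx:bounded_brier} and \ref{apdx:bounded_logloss}, so I would model the write-up on those derivations and close by verifying that the algebra collapses back to $\int_a^b \Rmin(c)/(1-c)\,dc$ as required.
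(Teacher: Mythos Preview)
Your proposal is correct and follows essentially the same route as the paper: the paper also reduces to $\text{NB}(c)=\pi-\Rmin(c)/(1-c)$ and then matches the integrand $\tfrac{c}{1-c}(1-\pi)(1-F_0(c))+\pi F_1(c)$ against the derivatives of the two branches of $L$. The only cosmetic difference is that the paper invokes the combined Lemma~\ref{lem:combine_classes} in one step (starting from the loss side and working toward $\int_a^b \Rmin(c)/(1-c)\,dc$), whereas you propose to apply Lemmas~\ref{lem:pos_brier} and~\ref{lem:neg_brier} separately (starting from the integral side); the substance is identical.
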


\begin{proof}

Note that
$\frac{dL(x,y)}{dc} = \begin{cases}
  1 & \text{if } y=1 \\
  \frac{c}{1-c} & \text{if } y=0
\end{cases}$.  Then.
\begin{align*}
  &\pi - \frac{1}{b-a}\biggl[
  \underset{(x,y)\in\mathcal{D}}{\mathbb{E}}[L(\max(a,\min(b, s(x), y)))]
  \;-\; \underset{(x,y)\in\mathcal{D}}{\mathbb{E}}[L(\max(a,\min(b, y)),y)]
  \biggr]
  \\&\text{Using Lemma \ref{lem:combine_classes}, we have}
  \\&=\pi - \frac{1}{b-a}\int_{c=a}^{c=b} \Bigl(
  \frac{dL(c,0)}{dc}\; (1-\pi)(1-F_0(c))
  + \frac{dL(c,1)}{dc}\;\pi\; F_1(c)
  \Bigr) \; dc
  \\&=\pi - \frac{1}{b-a}\int_{c=a}^{c=b}
  \frac{\Rmin(c)}{1-c}
  \; dc
  \\&=\underset{c \sim \text{Uniform}(a,b)}{\mathbb{E}}\pi - \frac{\Rmin(c)}{1-c}
  \\&=\underset{c \sim \text{Uniform}(a,b)}{\mathbb{E}}\text{ NB}(c)
\end{align*}
\end{proof}

\begin{figure}[ht]
\begin{center}
\includegraphics[width=0.8\columnwidth]{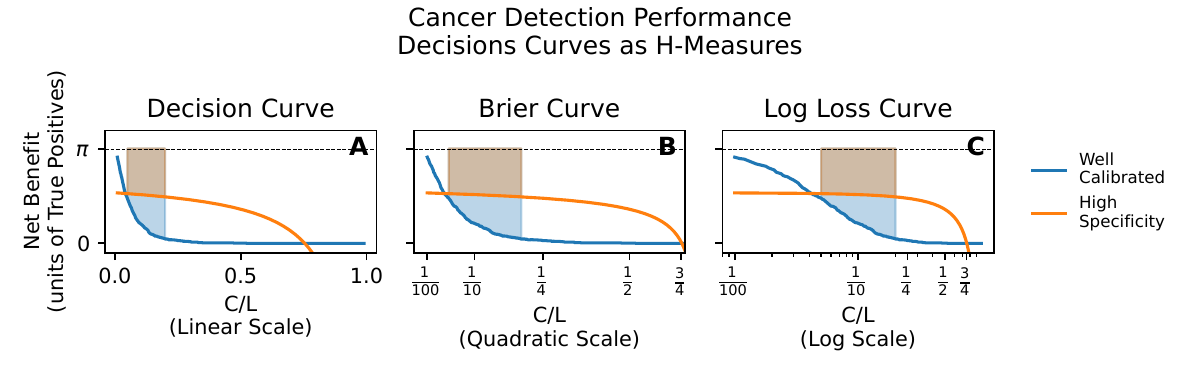}
\vskip -0.1in
\caption{
The figure shows the DCA (A), which can be rescaled so that for an interval of cost ratios, the area above the curve and below the prevalence $\pi$ is equal to the bounded threshold Brier score (B) or bounded threshold log loss (C).
}
\label{fig:dca}
\end{center}
\end{figure}

\begin{theorem}[Quadratically Rescaled Decision Curve]
\label{apdx:quad_dca}
Let $\phi(c) \triangleq \frac{-(1-c)^2}{2}$ and therefore $\frac{d\phi(c)}{dc} = 1-c$.
Note that this is invertible on the interval $[0,1]$.
\begin{align*}
  &\frac{1}{b-a}\int_{x=\phi(a)}^{x=\phi(b)} \pi - \text{NB}(x) \; dx
  \\&\text{Using Theorem \ref{apdx:bounded_dca}, we have}
  \\&=\frac{1}{b-a}\int_{x=\phi(a)}^{x=\phi(b)} \frac{\Rmin(\phi^{-1}(x))}{1-\phi^{-1}(x)} \; dx
  \\&=\frac{1}{b-a}\int_{c=a}^{c=b} \frac{\Rmin(c)}{1-c} \; (1-c)dc
  \\&=\underset{c \sim \text{Uniform}(a,b)}{\mathbb{E}}\Rmin(c)
\end{align*}
\end{theorem}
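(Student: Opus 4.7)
The plan is to reduce the identity to two facts: the representation of Net Benefit as an affine H-measure from Theorem~\ref{thm:net_benefit_h}/\ref{apdx:dca}, and an elementary change of variables whose Jacobian is engineered to cancel a $(1-c)^{-1}$ factor. Specifically, the rescaling $\phi(c) = -(1-c)^2/2$ was chosen precisely so that $\phi'(c) = 1-c$ matches the denominator that appears in $\pi - \text{NB}(c) = \Rmin(c)/(1-c)$, so the whole argument is a setup-and-substitute.

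First, I would note that $\phi$ is strictly increasing on $[0,1)$ (with $\phi'(c)=1-c \geq 0$, vanishing only at $c=1$), so on any sub-interval $[a,b] \subset (0,1)$ it has a smooth inverse, and the rescaled integration variable $x$ corresponds unambiguously to the underlying threshold $c = \phi^{-1}(x)$. I would then interpret $\text{NB}(x)$ in the stated integral as $\text{NB}(\phi^{-1}(x))$, i.e.\ Net Benefit viewed as a function of the rescaled horizontal coordinate on the Decision Curve.

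Second, I would substitute the H-measure identity $\pi - \text{NB}(c) = \Rmin(c)/(1-c)$ into the integrand (this is exactly the content of Theorem~\ref{apdx:dca}, which is what the ``Using Theorem~\ref{apdx:bounded_dca}'' step is really invoking). Then I would change variables from $x$ to $c$ via $x = \phi(c)$, giving $dx = (1-c)\, dc$ with new limits $c=a$ and $c=b$. The Jacobian $(1-c)$ cancels the $(1-c)$ in the denominator, reducing the integrand to $\Rmin(c)$; dividing by $b-a$ then produces the uniform expectation $\mathbb{E}_{c \sim \text{Uniform}(a,b)}\,\Rmin(c)$.

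There is no serious analytic obstacle. The only thing worth being careful about is the normalization: one might naively expect the average over the rescaled interval to carry the factor $1/(\phi(b)-\phi(a))$, but using $1/(b-a)$ is exactly what makes the Jacobian absorb into a unit-weight integral over $[a,b]$. In other words, $1/(b-a)$ is not the length of the rescaled interval but rather the length of the \emph{original} threshold interval, and the theorem's punchline is that this choice of normalization is precisely what converts the rescaled area above the Decision Curve into the bounded-threshold Brier score. Regularity at $c=1$, where $\phi'$ vanishes, is not a concern since we are assuming $b<1$.
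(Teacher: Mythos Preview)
Your proposal is correct and follows essentially the same route as the paper: invoke the affine relation $\pi - \text{NB}(c) = \Rmin(c)/(1-c)$, then change variables $x = \phi(c)$ so that the Jacobian $(1-c)$ cancels the denominator. You also correctly catch that the step labeled ``Using Theorem~\ref{apdx:bounded_dca}'' is really invoking Theorem~\ref{apdx:dca}; your added remarks on invertibility, the interpretation of $\text{NB}(x)$, and the $1/(b-a)$ normalization are all sound and make the argument cleaner than the paper's terse chain of equalities.
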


\begin{theorem}[Logarithmically Rescaled Decision Curve]
\label{apdx:log_dca}
Let $\phi(c) \triangleq \ln c$ and therefore $\frac{d\phi(c)}{dc} = \frac{1}{c}$.
Note that this is invertible on the interval $(0,1]$.
\begin{align*}
  &\frac{1}{\log\frac{b}{1-b}-\log\frac{a}{1-a}} \int_{x=\phi(a)}^{x=\phi(b)} \pi - \text{NB}(x) \; dx
  \\&\text{Using Theorem \ref{apdx:bounded_dca}, we have}
  \\&=\frac{1}{\log\frac{b}{1-b}-\log\frac{a}{1-a}} \int_{x=\phi(a)}^{x=\phi(b)} \frac{\Rmin(\phi^{-1}(x))}{1-\phi^{-1}(x)} \; dx
  \\&=\frac{1}{\log\frac{b}{1-b}-\log\frac{a}{1-a}} \int_{c=a}^{c=b} \frac{\Rmin(c)}{1-c} \; \frac{dc}{c}
  \\&= \underset{\ell\sim\text{Uniform}\bigl(
    \log\frac{a}{1-a},\; \log\frac{b}{1-b}
    \bigr)}{\mathbb{E}}\left[\Rmin(c=\frac{1}{1+\exp-\ell})\right]
\end{align*}
\end{theorem}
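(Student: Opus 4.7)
The plan is to chain together the pointwise identity from Theorem~\ref{thm:net_benefit_h} with two changes of variables, in exact parallel to the quadratic case (Theorem~\ref{apdx:quad_dca}). The only ingredient beyond bookkeeping is the recognition of which Jacobian the rescaling $\phi(c)=\ln c$ supplies.

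First, I would interpret the integrand $\pi-\text{NB}(x)$ for $x\in[\phi(a),\phi(b)]=[\ln a,\ln b]$ as $\pi-\text{NB}(\phi^{-1}(x))$, i.e.\ net benefit evaluated at the threshold $c=e^x$ on the rescaled $x$-axis of the decision curve. By Theorem~\ref{thm:net_benefit_h}, this integrand equals $\Rmin(\phi^{-1}(x))/(1-\phi^{-1}(x))$, which reproduces the first intermediate line in the stated derivation.

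Next I would change variables back to $c$: since $\phi(c)=\ln c$ gives $dx=dc/c$, the integral becomes $\int_a^b \Rmin(c)/[(1-c)c]\,dc$. The key observation is that the $1/c$ Jacobian coming from the logarithmic rescaling combines with the $1/(1-c)$ factor already present inside $\pi-\text{NB}(c)$ to produce exactly $1/[c(1-c)]$, which is the Jacobian of the log-odds map $\ell=\log(c/(1-c))$. A final substitution $\ell=\log(c/(1-c))$ then yields $\int_{\log(a/(1-a))}^{\log(b/(1-b))} \Rmin(1/(1+e^{-\ell}))\,d\ell$, and dividing by the width of the $\ell$-interval $\log\frac{b}{1-b}-\log\frac{a}{1-a}$ converts the integral into the claimed uniform expectation over log-odds.

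The main conceptual obstacle, such as it is, is that one might naively expect the correct rescaling to be $\phi(c)=\log(c/(1-c))$ rather than $\phi(c)=\ln c$, since the answer is a log-odds uniform expectation. Resolving this is the substance of the theorem: the $1/(1-c)$ already embedded in $\pi-\text{NB}$ supplies half of the log-odds Jacobian, so only a plain logarithmic (not full logit) rescaling of the decision curve's $x$-axis is required to recover Log Loss. Once this is noticed, the argument reduces to two routine changes of variables.
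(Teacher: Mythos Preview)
Your proposal is correct and follows essentially the same route as the paper: apply the pointwise identity $\pi-\text{NB}(c)=\Rmin(c)/(1-c)$, undo the $\phi(c)=\ln c$ substitution to pick up the factor $1/c$, and then recognize $\frac{dc}{c(1-c)}$ as the log-odds differential (the paper leaves this last step implicit, having already carried it out in the proof of Theorem~\ref{apdx:bounded_logloss}). Your closing remark about why the plain logarithm rather than the logit is the correct rescaling is a nice piece of intuition that the paper does not spell out.
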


\section{Comparison of Results from \texorpdfstring{\citet{vickers17}}{Vickers et al. (2017)}}
\label{apdx:checklist_vickers}

\citet{vickers17} provides a table of results showing that the ordering of model quality according to Brier Score fails to match the ordering of model quality according to Net Benefit at a 5\% threshold.
We reproduce their data generating process and show the same table, with results sorted by the Net Benefit at a 5\% threshold for convenience.
Note that the ordering by overall Brier Score is indeed quite different.
But the ordering by Bounded Threshold Brier Score is almost the same.
The single, instructive exception is Assume All Positive, where the Net Benefit at a 20\% threshold sharply disagrees with Net Benefit at a 5\% threshold.
In this case, the Bounded Threshold Brier Score puts some weight on these higher threshold cases.

\label{apdx:comparison_vickers17}
\begin{tabular}{lllllllll}
  \toprule
   & AUC-ROC & Brier & NB & NB & NB & Brier \\
  test & & & 5\% & 10\% & 20\% & 5\%-20\% \\
  \midrule
  \textbf{Highly sensitive} & 0.73 & 0.41 & 0.17 & 0.15 & 0.09 & 0.12 \\
  \textbf{Underestimating risk} & 0.75 & 0.15 & 0.16 & 0.12 & 0.06 & 0.16 \\
  \textbf{Well calibrated} & 0.75 & 0.17 & 0.16 & 0.12 & 0.05 & 0.17 \\
  \textbf{Overestimating risk} & 0.75 & 0.20 & 0.16 & 0.11 & 0.03 & 0.18 \\
  \textbf{Assume all positive} & 0.50 & 0.80 & 0.16 & 0.11 & 0.00 & 0.20 \\
  \textbf{Highly specific} & 0.73 & 0.14 & 0.10 & 0.10 & 0.09 & 0.18 \\
  \textbf{Severely underestimating risk} & 0.75 & 0.18 & 0.09 & 0.04 & 0.01 & 0.29 \\
  \textbf{Assume all negative} & 0.50 & 0.20 & 0.00 & 0.00 & 0.00 & 0.35 \\
  \bottomrule
\end{tabular}

Because the data is synthetic, we were able to generate this table based on 1,000,000 samples.
The largest standard deviation for any cell in this table is 0.00035, two decimal places further than the precision we have reported for the cells, and as such, standard errors have been omitted from the table cells for clarity.
%!TEX root=../main.tex
\section{Choices of Threshold Distributions}
\label{apdx:threshold_distributions}

\begin{figure}[h!]
\begin{center}
\includegraphics[width=.9\columnwidth]{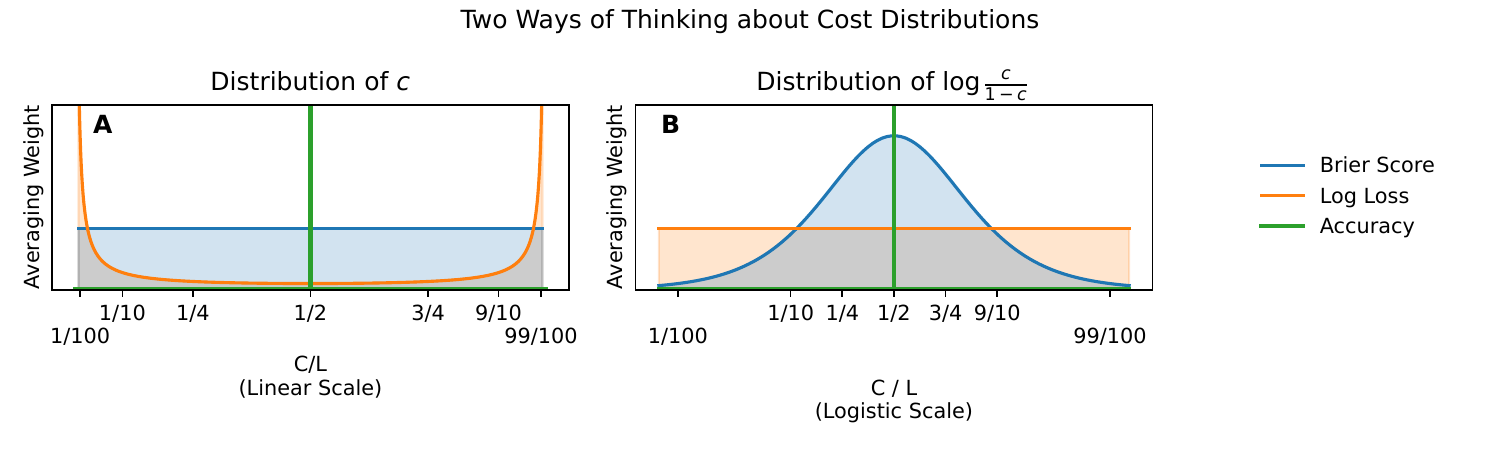}
\vskip -0.1in
\caption{
\small Where the tradeoff between false positives and false negatives is one-to-one, accuracy is a good metric.
Where trade-offs are generally moderate, Brier score is a good metric.
Where trade-offs are frequently extreme, log loss is better.
}
\label{fig:prior_weights}
\end{center}
\end{figure}

\begin{figure}[ht]
\centering
\includegraphics[width=0.95\columnwidth]{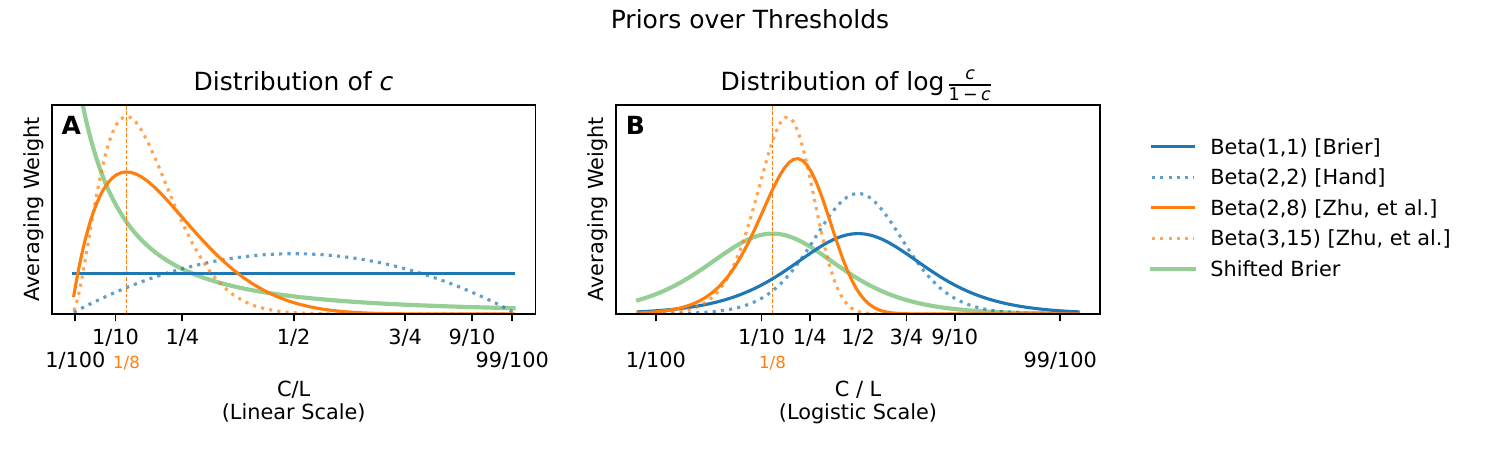}
\vspace{-0.1in}
\caption{\small
Comparison of cost ratio priors implicit in mixture-of-thresholds metrics. Brier score assumes \(\text{Beta}(1,1)\); Hand proposes increasing concentration with \(\text{Beta}(2,2)\); \citet{recentbeta24} shifts the mode while inheriting concentration challenges.
}
\label{fig:hmeasure}
\end{figure}

Interest in cost-sensitive evaluation during the late 1990s brought renewed attention to the Brier score.
\citet{hand99} noted that while domain experts rarely specify exact cost ratios, they can often provide plausible bounds.
To improve interpretability, he proposed the LC-Index, which ranks models at each cost ratio and plotting their ranks across the range.
Later, \citet{hand09} introduced the more general \textbf{H-measure}, defined as any weighted average of regret, and recommended a \(\text{Beta}(2,2)\) prior to emphasize cost ratios near \(c = 0.5\).

Despite its appeal, the H-measure's intuition can be opaque: even the \(\text{Beta}(1,1)\) prior used by the Brier score already concentrates mass near parity on the log-odds scale (Figure~\ref{fig:hmeasure}).

\citet{recentbeta24} generalize this idea using asymmetric Beta distributions centered at an expert-specified mode (e.g., \(\text{Beta}(2,8)\)).
However, this raises concerns: the mode is not invariant under log-odds transformation, may be less appropriate than the mean, and requires domain experts to specify dispersion—a difficult task in practice.
A simpler alternative is to shift the Brier score to peak at the desired cost ratio via a transformation of the score function \(s(x)\), as shown in Appendix~\ref{apdx:shifted_brier}.

Rather than infer uncertainty via a prior, \citet{recentbeta24} suggest eliciting threshold bounds directly (e.g., from clinicians).
We argue that this approach is better served by constructing explicit threshold intervals rather than encoding beliefs via Beta distributions.
%!TEX root=../main.tex
%
\section{Dependent Decisions with Fixed \texorpdfstring{$K$}{K}: Equivalence of Top-\texorpdfstring{$K$}{K} Metrics}
\label{apdx:fixed_k}

When the capacity $K$ is known exactly at model selection time, a metric like Net Benefit@$K$ or True Positives@$K$ (TP@$K$) is preferable to AUC-ROC. Although Net Benefit@$K$ is conceptually superior because it grounds the evaluation in actual costs, it induces the exact same model ranking as standard ranking metrics like Precision@$K$ and Recall@$K$ for a fixed dataset, fixed budget $K$, and fixed cost ratio $c$.

To see this algebraically, let $P$ be the total number of actual positives in the evaluation dataset. If a model is constrained to predict exactly $K$ positives, let $TP$ denote the number of true positives it selects. The number of false positives is constrained to be $FP = K - TP$, and the number of false negatives left in the unselected pool is $FN = P - TP$.

Under our expected regret formulation, the total cost incurred by the decision rule is:
\begin{align*}
\text{Cost} &= c \cdot FP + (1-c) \cdot FN \\
&= c(K - TP) + (1-c)(P - TP) \\
&= cK - cTP + P - TP - cP + cTP \\
&= cK + (1-c)P - TP
\end{align*}

Because the cost ratio $c$, the budget $K$, and the total positives $P$ are all fixed constants for a given evaluation, the total cost collapses to a constant minus $TP$. 

Consequently, minimizing cost (or maximizing Net Benefit) under a fixed budget is mathematically identical to maximizing $TP$. Because Precision@$K = TP / K$ and Recall@$K = TP / P$ are merely positive scalar multiples of $TP$, all of these metrics are affine transformations of one another and will induce the exact same model ranking.

\section{Bounded Threshold AUC-ROC}
\label{apdx:auc}

A natural question to ask is whether the idea of bounding thresholds can be applied to make the AUC-ROC more useful, the same way it makes the Brier score more useful.

To investigate this question, we review the existing attempts to bound the AUC-ROC, and show a new and conceptually clearer way to do it.  We then evaluate the strengths and weakness of this approach, and conclude that it is better to use the bounded threshold Brier score instead.  For this reason the development of this new bounded threshold AUC-ROC is left to the Appendix rather than the main text.

\subsection{Outline}
\citet{hand09} showed that, under the strong assumption of calibration, the AUC-ROC is equivalent to an integral of expected regret where for our distribution of users' costs, we use the distribution of scores from the model.

We extend this result to show that we can limit the cost distribution over which we take an expectation to those scores from the model that lie in a given interval.

We will first show that we can convert the averages $pAUC_+$ and $pAUC_-$ into expectations over a single distribution.  We then combine the terms, and find that they give expected regret plus a couple of simpler terms.

Under calibration, the Bayes-optimal threshold equals the cost ratio $c$
(via $R_{\min}(\tau)=(1-\pi)\tau(1-F_0)+\pi(1-\tau)F_1$ and the first-order
condition), so bounding the threshold interval $[a,b]$ is equivalent to
bounding plausible cost ratios $c\in[a,b]$ in deployment.

\subsection{Background}

Our work builds on the ideas of pAUC from \cite{mcclish89} which average TPR based on bounding the FPR, but lacks decision-theoretic interpretation.  It is not clear why we would want to limit a range of FPR, and it is not clear why a uniform distribution of FPR produces a meaningful average of TPR.

It also builds on the ideas of PAI from \cite{jiang96} which average FPR based on bounding the TPR, but has many of the same problems.

Finally, our work also builds on the idea of Balanced Average Accuracy or Concordant Partial AUC from \cite{balancedaccuracy23} which combines those two averages assuming balanced classes, and adds the idea of bounding by a quantile of the score rather than FPR or TPR.

We add the idea of bounding by the costs encountered in deployment, rather than a quantile of scores, which is a better model of the deployment scenario.  Our measure also has a direct interpretation as expected regret, averaged over a single distribution rather than two.

\subsection{Notation}

Let $F_0$ and $F_1$ be the score CDFs for the negative and positive classes with
densities $f_0=\frac{d}{dc}F_0$ and $f_1=\frac{d}{dc}F_1$. Let
\[
F(c) \;=\; (1-\pi)F_0(c) + \pi F_1(c), \qquad
f(c) \;=\; \frac{d}{dc}F(c) \;=\; (1-\pi)f_0(c) + \pi f_1(c),
\]
be the mixture CDF and density.

Note that the True Positive Rate is the fraction of positive cases where the score exceeds the threshold, so $TPR = 1 - F_1(\tau)$, which can be counterintuitive.  Likewise, False Positive Rate is the fraction of negative cases where the score is greater than the threshold, so $FPR = 1-F_0(\tau)$.

We also define two quantities for the area beneath a subset of the ROC curve as well as the area to the right.

\begin{definition}[$pAUC_+$]
This represents the area under a portion of the ROC curve, bounded by the thresholds $a$ and $b$.
$$pAUC_+(a,b)
= \int_{\tau = a}^{b} (1-F_1(\tau)) dF_0(\tau)
$$
\end{definition}

Note that this is closely related to the partial AUC metric from \cite{mcclish89}, although FPR is decreasing in the threshold $\tau$, so the order of the bounds is reversed.  Also our function is defined in terms of the threshold value, rather than the FPR value.
$$ \text{Partial AUC}(\alpha, \beta) = \int_{FPR = \alpha}^{FPR = \beta}
TPR(FPR)\; dFPR = pAUC_+(FPR^{-1}(\beta), FPR^{-1}(\alpha))$$

\begin{definition}[$pAUC_-$]
This represents the area to the right of a portion of the ROC curve, bounded by the thresholds $a$ and $b$.
$$pAUC_-(a,b)
= \int_{\tau=a}^{b} F_0(\tau)dF_1(\tau)
$$
\end{definition}

Note that this is closely related to the partial area index metric from \cite{jiang96}, although FPR is decreasing in the threshold $\tau$, so the order of the bounds is reversed.  Also our function is defined in terms of the threshold value, rather than the TPR value.

$$ \text{PAI}(\alpha, \beta) = \int_{TPR = \alpha}^{TPR = \beta}
(1-FPR(TPR))\; dTPR = pAUC_-(TPR^{-1}(\beta), TPR^{-1}(\alpha))
$$

\subsection{Main Result}

We first convert the averages $pAUC_+$ and $pAUC_-$ taken over positive or negative class scores into averages over all scores.

Under calibration, the local shares satisfy
$\frac{\pi dF_1(\tau)}{dF(\tau)}=\tau$
and $\frac{(1-\pi)dF_0(\tau)}{dF(\tau)}=1-\tau$.

\begin{lemma}
\label{lemma:pAUCplus}
$pAUC_+$ is equivalent to a threshold-weighted average of $1-F_1(\tau)$ over the scores from the combined distribution of positive and negative examples, up to a constant factor.
\begin{align*}
  \frac{\pi(1-\pi)}{F(b)-F(a)}\;
  pAUC_+(a,b)
  = \underset{x,y \sim \mathcal{D} | s(x) \in [a,b]}{\mathbb{E}}[\pi\;(1-s(x))\;(1-F_1(s(x)))]
\end{align*}
\begin{proof}
We use the assumption of calibration to do a change of variables.
\begin{align*}
  pAUC_+(a,b)
  &= \int_{\tau = a}^{b} (1-F_1(\tau)) dF_0(\tau)
  \\&= \frac1{1-\pi}\int_{\tau = a}^{b} (1-F_1(\tau)) \frac{(1-\pi) dF_0(\tau)}{dF(\tau)} dF(\tau)
  \\&= \frac1{1-\pi}\int_{\tau = a}^{b} (1-F_1(\tau)) (1-\tau) dF(\tau)
  \\&= \frac{F(b)-F(a)}{1-\pi}\;\underset{\tau \sim F | \tau \in [a,b]}{\mathbb{E}}[(1-F_1(\tau))\;(1-\tau)]
  \\&= \frac{F(b)-F(a)}{\pi(1-\pi)}\;\underset{x,y \sim \mathcal{D} | s(x) \in [a,b]}{\mathbb{E}}[\pi\;(1-s(x))\;(1-F_1(s(x)))]
\end{align*}
\end{proof}
\end{lemma}

An equivalent identity holds for the negative class.

\begin{lemma}
  \label{lemma:pAUCminus}
  $pAUC_-$ is equivalent to a threshold-weighted average of $F_0(\tau)$ over the scores from the combined distribution of positive and negative examples, up to a constant factor.
  \begin{align*}
    \frac{\pi(1-\pi)}{F(b)-F(a)}\;
    pAUC_-(a,b)
    = \underset{x,y \sim \mathcal{D} | s(x) \in [a,b]}{\mathbb{E}}[(1-\pi)\;s(x)\;F_0(s(x))]
  \end{align*}
  \begin{proof}
We use the assumption of calibration to do a change of variables.
  \begin{align*}
    pAUC_-(a,b)
    &= \int_{\tau = a}^{b} F_0(\tau) dF_1(\tau)
    \\&= \frac{1}{\pi}\int_{\tau = a}^{b} F_0(\tau) \frac{\pi dF_1(\tau)}{dF(\tau)} dF(\tau)
    \\&= \frac{1}{\pi}\int_{\tau = a}^{b} F_0(\tau)\; \tau\; dF(\tau)
    \\&= \frac{F(b)-F(a)}{\pi}\;\underset{\tau \sim F | \tau \in [a,b]}{\mathbb{E}}[F_0(\tau)\;\tau]
    \\&= \frac{F(b)-F(a)}{\pi(1-\pi)}\;\underset{x,y \sim \mathcal{D} | s(x) \in [a,b]}{\mathbb{E}}[(1-\pi)\;s(x)\;F_0(s(x))]
  \end{align*}
\end{proof}
\end{lemma}

Since they are now expectations over the same measure, we can combine them.

\begin{theorem}
\label{thm:bounded_auc}
The weighted sum of $pAUC_+$ and $pAUC_-$ is equivalent to negative expected regret, plus a couple of simpler additive terms.

$$
    \frac{\pi(1-\pi)}{F(b)-F(a)}\;
    \left[pAUC_+(a,b) + pAUC_-(a,b)\right]
    = \underset{x,y \sim \mathcal{D} | s(x) \in [a,b]}{\mathbb{E}}[-\Rmin(s(x))] + \pi + (1-2\pi)\underset{x,y \sim \mathcal{D} | s(x) \in [a,b]}{\mathbb{E}}[y]
$$
\begin{proof}
Recall $\Rmin(\tau) = \pi\;(1-s(x))\;F_1(s(x)) + (1-\pi)\;s(x)\;(1-F_0(s(x)))$.

We substitute according to Lemmas \ref{lemma:pAUCplus} and \ref{lemma:pAUCminus}.
\begin{align*}
&\frac{\pi(1-\pi)}{F(b)-F(a)}\;
\left[pAUC_+(a,b) + pAUC_-(a,b)\right]
\\&= \underset{x,y \sim \mathcal{D} | s(x) \in [a,b]}{\mathbb{E}}[\overbrace{\pi\;(1-s(x))\;(1-F_1(s(x)))}^{\text{from } pAUC_+} + \overbrace{(1-\pi)\;s(x)\;F_0(s(x))}^{\text{from } pAUC_-}]
\\&= \mathbb{E}[\overbrace{\pi(1-s(x))-\pi\;(1-s(x))\;F_1(s(x))}^{\text{from } pAUC_+} + \overbrace{(1-\pi)s(x) - (1-\pi)\;s(x)\;(1-F_0(s(x)))}^{\text{from } pAUC_-}]
\\&= \mathbb{E}[\overbrace{-\pi\;(1-s(x))\;F_1(s(x))}^{\text{from } pAUC_+} - \overbrace{(1-\pi)\;s(x)\;(1-F_0(s(x)))}^{\text{from } pAUC_-} + \pi(1-s(x))+(1-\pi)s(x)]
\\&= \mathbb{E}[-\Rmin(s(x)) + \pi + s(x) - 2\pi s(x)]
\end{align*}
We use the assumption of calibration to simplify the last term.
$\mathbb{E}[y|s(x)] = s(x) \implies \mathbb{E}[s(x)] = \mathbb{E}[y]$
\begin{align*}
&\frac{\pi(1-\pi)}{F(b)-F(a)}\;
\left[pAUC_+(a,b) + pAUC_-(a,b)\right]
\\&= \mathbb{E}[-\Rmin(s(x)) + \pi + s(x) - 2\pi s(x)]
\\&= \mathbb{E}[-\Rmin(s(x)) + \pi + (1-2\pi)s(x)]
\\&= \mathbb{E}[-\Rmin(s(x))] + \pi + (1-2\pi)\mathbb{E}[y]
\end{align*}
\end{proof}
\end{theorem}

\subsection{Generalization of Existing Metrics}

\begin{theorem}[Hand's Theorem]
\label{thm:hand}
If we don't clip at all (set $(a,b) = (0,1)$), then we get exactly Hand's theorem.

$$AUC = 1 - \frac{1}{2\pi(1-\pi)}\mathbb{E}[\Rmin(c)]$$

\begin{proof}

\begin{align*}
pAUC_-(a,b) + pAUC_+(a,b)
&= \frac{F(b)-F(a)}{\pi(1-\pi)}\left[
  \underset{x,y \sim \mathcal{D} | s(x) \in [a,b]}{\mathbb{E}}[-\Rmin(s(x))] + \pi + (1-2\pi)\underset{x,y \sim \mathcal{D} | s(x) \in [a,b]}{\mathbb{E}}[y]
  \right]
\\&= \frac{1}{\pi(1-\pi)}
\left[
\underset{x,y \sim \mathcal{D} | s(x) \in [a,b]}{\mathbb{E}}[-\Rmin(s(x))] + \pi + (1-2\pi)\pi
\right]
\\&= \frac{1}{\pi(1-\pi)}\left[
  \underset{x,y \sim \mathcal{D} | s(x) \in [a,b]}{\mathbb{E}}[-\Rmin(s(x))] + 2\pi(1-\pi)
  \right]
\\AUC + AUC &= - \frac{1}{\pi(1-\pi)}\mathbb{E}[\Rmin(c)] + 2
\\AUC &= 1 - \frac{1}{2\pi(1-\pi)}\mathbb{E}[\Rmin(c)]
\end{align*}
\end{proof}
\end{theorem}

\begin{definition}
Define $\Delta FPR = FPR(a) - FPR(b) = F_0(b) - F_0(a)$.
\end{definition}

\begin{definition}
Define $\Delta TPR = TPR(b) - TPR(a) = F_1(b) - F_1(a)$.
\end{definition}

\begin{definition}
The Balanced Average Accuracy of Concordant Partial AUC is defined in \citet{balancedaccuracy23} as
$$
BAA(a,b) =
\overbrace{
\frac{\Delta FPR}{\Delta TPR + \Delta FPR}
}^{\text{Fraction of Negative Cases}}
\quad
\overbrace{
\frac{pAUC_+(a,b)}{\Delta FPR}
}^{\text{Average Positive Accuracy}}
+
\overbrace{
\frac{\Delta TPR}{\Delta TPR + \Delta FPR}
}^{\text{Fraction of Positive Cases}}
\quad
\overbrace{\frac{pAUC_-(a,b)}{\Delta TPR}}^{\text{Average Negative Accuracy}}
$$
\end{definition}

\begin{lemma}
We can simplify this and express it using our notation
\begin{align*}
BAA(a,b)
&= \frac1{\Delta TPR + \Delta FPR}\biggl[{pAUC_+(a,b) + pAUC_-(a,b)}\biggr]
\\&= \frac1{F_1(b) - F_1(a) + F_0(b) - F_0(a)}\biggl[{pAUC_+(a,b) + pAUC_-(a,b)}\biggr]
\end{align*}
\end{lemma}

\begin{theorem}[Concordant Partial AUC / Balanced Average Accuracy]

When $\pi=\frac12$, then the concordant partial AUC / balanced average accuracy is a linear transformation of expected regret.

$$
BAA(a,b)
= 1 - 2\,\underset{x,y \sim \mathcal{D} | s(x) \in [a,b]}{\mathbb{E}}[\Rmin(s(x))]
$$

\begin{proof}
We first expand the definition using Theorem \ref{thm:bounded_auc}.
\begin{align*}
BAA
&= \frac1{F_1(b) - F_1(a) + F_0(b) - F_0(a)}\biggl[{pAUC_+(a,b) + pAUC_-(a,b)}\biggr]
\\&=
    \frac1{F_1(b) - F_1(a) + F_0(b) - F_0(a)}
    \frac{F(b)-F(a)}{\pi(1-\pi)}\;
    \underset{x,y \sim \mathcal{D} | s(x) \in [a,b]}{\mathbb{E}}[-\Rmin(s(x))] + \pi + (1-2\pi)\underset{x,y \sim \mathcal{D} | s(x) \in [a,b]}{\mathbb{E}}[y]
\\&=
    \frac{F(b)-F(a)}{F_1(b) - F_1(a) + F_0(b) - F_0(a)}
    \frac1{\pi(1-\pi)}\;
    \underset{x,y \sim \mathcal{D} | s(x) \in [a,b]}{\mathbb{E}}[-\Rmin(s(x))] + \pi + (1-2\pi)\underset{x,y \sim \mathcal{D} | s(x) \in [a,b]}{\mathbb{E}}[y]
\end{align*}

Now consider what happens when $\pi = \frac12$.

$$\frac{F(b)-F(a)}{F_1(b) - F_1(a) + F_0(b) - F_0(a)}
= \frac{\pi(F_1(b) - F_1(a)) + (1-\pi)(F_0(b) - F_0(a))}{F_1(b) - F_1(a) + F_0(b) - F_0(a)}
= \frac12$$

$$\frac1{\pi(1-\pi)} = 4$$

$$\pi + (1-2\pi)\mathbb{E}[y] = \frac12 + 0\cdot\mathbb{E}[y] = \frac12$$

Substituting these values back in, we get:
\begin{align*}
  BAA(a,b) &= \frac12\cdot4\cdot\left[\mathbb{E}[-\Rmin(s(x))] + \frac12\right]
  \\&= 1- 2\mathbb{E}[\Rmin(s(x))]
\end{align*}
\end{proof}
\end{theorem}

\subsection{Discussion}

Thus in the case that we know the classifier is calibrated, we can use a combination of the area beneath and to the right of a portion of the AUC curve to get an expected regret.  This will still be taking an expectation over the scores of data points whose scores lie in the indicated interval.  If there are no such points, then this will not work.

However, this representation raises a conceptual concern: it uses predicted probabilities, intended to estimate outcome likelihoods, as implicit estimates of cost ratios.
As \citet{hand09} observes, this allows the model to determine the relative importance of false positives and false negatives,
\emph{implicitly allowing the model to determine how costly it is to miss a cancer diagnosis, or how acceptable it is to let a guilty person go free.}
The model, however, is trained not to encode values or ethical trade-offs, but to estimate outcomes.
Using its scores to induce a cost distribution embeds assumptions about harms and preferences that it was never intended to model.
While a calibrated model ensures that the mean predicted score equals the class prevalence \(\pi\), there is no principled reason to treat \(\pi\) as an estimate of the true cost ratio \(c\).
Rare outcomes are not necessarily less costly, and often the opposite is true.

This analysis underscores the broader risk of deferring normative judgments, about cost, harm, and acceptability, to statistical models.
A more appropriate approach would involve eliciting plausible bounds on cost ratios from domain experts during deployment, rather than allowing the score distribution of a trained model to implicitly dictate them.
Finally, this equivalence assumes calibration, which is frequently violated in practice.
Metrics that rely on this assumption may be ill-suited for robust evaluation under real-world conditions.

\subsection{Recommendation}
While bounded AUC-ROC can be shown to represent expected regret under the strong assumption of calibration, its reliance on model scores as implicit cost distributions and the strength of the calibration assumption make it unsuitable for most applications. Practitioners should instead use the Bounded Threshold Brier Score or Log Loss, which allow explicit specification of clinically or operationally relevant cost bounds without these problematic assumptions.

%!TEX root=../main.tex
%
\section{Skill Scores: Interpretable Bounded Threshold Metrics}
\label{apdx:skillscore}

The newly proposed bounded-threshold metrics have a clear semantic benefit: they explicitly evaluate regret over the relevant range of decision contexts.  However, there are two aspects to metric interpretability: (1) clear semantics and (2) easy intuition.  Because the calculations are linear, scaling up or recentering our metrics will not change their ordering.  This offers us an opportunity.

One of the most helpful guides to interpreting a metric is the existence of anchors with clear meanings.
A skill score re-centers a metric so that 0.0 represents the performance of a naive baseline model, 1.0 represents perfect performance, and negative values indicate worse-than-guessing performance.
In general, guessing optimally based only on the population positive class prevalence $\pi$ without any features of the data is a good baseline; this is called the climatological baseline in meteorology.

For example, AUC-ROC of 1.0 means the model predicts everything correctly.
The baseline / climatological model would get 0.5, and 0.0 would mean every single label was flipped from the correct value.
So a Skill Score for AUC would be $2AUC - 1$.

Brier Score varies between 0.0 and 1.0, with 0.0 being the best possible performance and 1.0 requiring flipping every single label.
However, the naive / climatological forecast gets a different score depending on the prevalence - it can be shown to be the Bernoulli variance: $\pi(1-\pi)$.
This is the same normalizing factor that occurs in Theorem 4.2 in the connection between expected regret and AUC-ROC.

Thus if $BS$ is the Brier Score, then the Brier Skill Score is $1 - \frac{BS}{\pi(1-\pi)}$.
Brier Skill Score is 0.0 for the naive / climatological forecast, and 1.0 for perfect performance, and gives the fraction of the available expected value that is captured by the model.

Consider, for example, a model with $BS=0.1$, and $\pi=0.2$.  The baseline / climatological model would have a $BS=0.16$.  So the Brier Skill Score suggests that, averaged over the data and over cost ratios between 0 and 1, the model reduces regret by $\frac{0.06}{0.16} = 37.5\%$ when compared to the baseline without a model.

A similar expression exists for the Log Loss where the naive / climatological performance can be shown to be the Bernoulli entropy $H(\pi)=-\pi\log(\pi)-(1-\pi)\log(1-\pi)$.
If $LL$ is the Log Loss, then the Skill Score is $1 - \frac{LL}{H(\pi)}$.

\subsection{Bounded Brier Score Skill Score}

The naive / climatological performance for the Bounded Brier Score is (up to a normalizing factor) $E\left[(y-\text{clip}(\pi))^2 - (y-\text{clip}(y))^2\right]$.
Since none of these terms depend on the data, except through the prevalence, this is a valid normalizing factor.
We can use it to produce a Bounded Brier Skill Score that shows the fraction of baseline expected regret that is removed by using the classifier.

The same thing is possible for a Bounded Log Loss Skill Score.

In a scenario where the deployment cost of a classifier must be weighed against wholly separate options, it can be useful to rescale our cost matrix such that the loss L matches some quantitative estimate from experts.

In a scenario where we simply want to understand if the classifier is doing something useful, it can be preferable to rescale using the skill score, so that 1.0 is always the best possible classifier, and 0.0 the result of not using a classifier at all.
%!TEX root=../main.tex
\section{LLM Literature Review}
\label{apdx:llm}

This appendix details our systematic approach to analyzing the use of evaluation metrics across machine learning research.
Our primary goal was to determine which metrics researchers prioritize when evaluating binary classifiers across different machine learning domains.
The findings provide important context for our main paper's recommendations on metric selection.

\begin{figure}[ht!]
\begin{center}
\includegraphics[width=.5\columnwidth]{img/llm.pdf}
\caption{
\small Claude 3.5 Haiku was used to analyze 2,610 papers from three major 2024 conferences.
Each plot summarizes the evaluation metrics used for binary classifiers.
Accuracy dominates outside healthcare, while AUC-ROC is more prevalent within healthcare domains.
Error bars come from binomial confidence intervals.
}
\label{fig:litreview}
\end{center}
\vskip -0.2in
\end{figure}

\subsection{Paper Acquisition}
Our data collection process focused on gathering papers from three major conferences in 2024: the International Conference on Machine Learning (ICML), the ACM Conference on Fairness, Accountability, and Transparency (FAccT), and the Conference on Health, Inference, and Learning (CHIL).
We developed automated scripts to acquire papers from their respective official sources:

\begin{itemize}
    \item ICML proceedings were accessed through OpenReview's conference platform: \url{https://openreview.net/group?id=ICML.cc/2024/Conference#tab-accept-oral}
    \item FAccT papers were obtained from the conference's official website: \url{https://facctconference.org/2024/acceptedpapers}
    \item CHIL proceedings were collected from the Proceedings of Machine Learning Research (PMLR): \url{https://proceedings.mlr.press/v248/}
\end{itemize}

For text extraction, we employed PyPDF2, a Python-based PDF processing library, to convert all acquired papers from PDF format to plain text.

\subsubsection{Classifier Identification}
We utilized Anthropic's Claude 3.5 Haiku model ("claude-3-5-haiku-20241022") to search the corpus for papers that mention binary classifiers.
The following prompt was sent to Anthropic's API along with the extracted text of each paper.

\begin{spverbatim}
    You are an AI assistant specializing in analyzing research papers in the field of machine learning and data science. Your task is to examine a given research paper and analyze its experimental methodology.

    Here is the research paper you need to analyze:

    <research_paper>
    {{RESEARCH_PAPER}}
    </research_paper>

    Please follow these steps to analyze the paper:

    1. Classifier Detection:
       - Determine if the paper involves a classifier.
       - If yes, identify whether it's binary, multiclass, or multilabel.
       - If no, explain why and continue to the next step.

    2. Experiment Detection:
       - Check if the paper includes experimental results.
       - If no, explain why and continue to the next step.

    3. Metric Analysis:
       - Identify which of the following metrics are reported:
         a) Classification metrics: Recall, Precision, F1, Accuracy
         b) Probabilistic metrics: Brier Score, Log Loss, Cross Entropy, Perplexity
         c) Error metrics: MSE, RMSE
         d) Cost/benefit metrics: Net Cost, Net Benefit
         e) Curve-based metrics: AUC-ROC, AUC-PR

       Important: Use only these exact metric names in your analysis. For example, use "AUC-ROC" instead of "AUROC", "AUC", or "AUCROC".

    4. Visualization Analysis:
       - Check for the inclusion of these visualizations:
         a) ROC curves
         b) Precision-Recall curves
         c) Brier curves
         d) Decision curves

    5. Summary:
       - Provide a JSON object summarizing your findings.

    For each step, wrap your thought process in <analysis_breakdown> tags before providing the final answer. In your analysis breakdown:
    - For classifier detection: List relevant quotes indicating the presence or absence of a classifier. Classify each quote as supporting binary, multiclass, or multilabel classification.
    - For experiment detection: List relevant quotes indicating the presence or absence of experiments. Summarize the type of experiments.
    - For metric and visualization analysis: Create a checklist of all possible metrics and visualizations mentioned in the instructions. Check them off one by one, citing relevant quotes for each.

    Use the following tags for your responses:
    <classifier> : Answer classifier-related questions
    <experiments> : Answer experiment-related questions
    <metrics> : List reported metrics
    <curves> : Indicate included visualizations
    <summary> : Provide the JSON summary

    Important guidelines:
    - Continue the analysis even if the paper doesn't involve classifiers or experiments.
    - Keep explanations concise (maximum 30 words for context in the JSON summary).
    - Include relevant quotes from the paper to support your findings.

    The JSON summary should follow this structure:
    {
        "has_classifier": boolean,
        "classifier_type": "none" | "binary" | "multiclass" | "multilabel",
        "has_experiments": boolean,
        "metrics": {
            "metric_name": {
                "present": boolean,
                "context": "Brief explanation (max 30 words)"
            }
        },
        "visualizations": {
            "visualization_name": boolean
        }
    }

    metric_name must be one of the following:
    "accuracy" | "auc" | "recall" | "precision" | "f1_score" | "mse" | "auprc" | "cross_entropy" | "rmse" | "mae" | "kl_divergence" | "average_precision"

    Please begin your analysis now.
\end{spverbatim}

We then extracted the JSON summary from the model's response, and found the headline results: that accuracy was dominant at ICML and FAccT, and that AUC-ROC was far more popular at CHIL.

After analyzing 2610 papers across the three conferences, we found significant differences in metric usage patterns:

\begin{itemize}
    \item At ICML and FAccT, \textbf{accuracy} was the dominant evaluation metric for binary classifiers, used in approximately 55.8\% and 61.3\% of relevant papers respectively.
    \item At CHIL, \textbf{AUC-ROC} was significantly more popular, appearing in 78.8\% of papers with binary classifiers, compared to accuracy at 33.6\%.
    \item AUC-PR was reported in 8.7\% of ICML papers, only 2.9\% of FAccT papers, but 27.7\% of CHIL papers, showing domain-specific preferences.
    \item  All other metrics were reported in less than 25\% of papers across the board.
\end{itemize}

These findings suggest substantial domain-specific differences in evaluation practices, particularly between general machine learning and healthcare applications.

\subsection{Second Check: More Powerful LLM}
We utilized Anthropic's Claude 3.5 Sonnet model ("claude-3-5-sonnet-20241022") to search those papers identified by Haiku as containing binary classifiers.  The following prompt was sent to Anthropic's API along with the extracted text of each paper.

\begin{spverbatim}
    You are an AI assistant specializing in analyzing research papers in the field of machine learning and data science. Your task is to examine the research paper given in the previous message, and analyze its experimental methodology.

    Please follow these steps to analyze the paper:

    1. Classifier Detection:
       - Determine if the paper involves a classifier.
       - If yes, identify whether it's binary, multiclass, or multilabel.
       - If no, explain why and continue to the next step.

    2. Experiment Detection:
       - Check if the paper includes experimental results.
       - If no, explain why and continue to the next step.

    3. Metric Analysis:
       - Identify which of the following metrics are reported:
         a) Classification metrics: Recall, Precision, F1, Accuracy
         b) Probabilistic metrics: Brier Score, Log Loss, Cross Entropy, Perplexity
         c) Error metrics: MSE, RMSE
         d) Cost/benefit metrics: Net Cost, Net Benefit
         e) Curve-based metrics: AUC-ROC, AUC-PR

       Important: Use only these exact metric names in your analysis. For example, use "AUC-ROC" instead of "AUROC", "AUC", or "AUCROC".

    4. Top-K Check:
       - When examining a binary edge classification task evaluated with AUC-ROC, how do we determine if there are constraints on the number of positive predictions allowed?
       -  For example, is the classifier free to predict any number of positives, or must it select exactly K edges?
       - What textual indicators or experimental details should we look for in the methodology to understand these constraints?

    5. Summary:
       - Provide a JSON object summarizing your findings.

    For each step, wrap your thought process in <analysis_breakdown> tags before providing the final answer. In your analysis breakdown:
    - For classifier detection: List relevant quotes indicating the presence or absence of a classifier. Classify each quote as supporting binary, multiclass, or multilabel classification.
    - For experiment detection: List relevant quotes indicating the presence or absence of experiments. Summarize the type of experiments.
    - For metric analysis: Create a checklist of all possible metrics mentioned in the instructions. Check them off one by one, citing relevant quotes for each.

    Use the following tags for your responses:
    <classifier> : Answer classifier-related questions
    <experiments> : Answer experiment-related questions
    <metrics> : List reported metrics
    <summary> : Provide the JSON summary

    Important guidelines:
    - Continue the analysis even if the paper doesn't involve classifiers or experiments.
    - Keep explanations concise (maximum 30 words for context in the JSON summary).
    - Include relevant quotes from the paper to support your findings.

    The JSON summary should follow this structure:
    {
        "has_classifier": boolean,
        "has_experiments": boolean,
        "decision_type": "independent" | "top-k" | "other" | "unknown",
        "metrics": {
            "metric_name": {
                "present": boolean,
                "context": "Brief explanation (max 30 words)"
            }
        }
    }

    Please begin your analysis now.
\end{spverbatim}

The Sonnet analysis confirmed our initial findings regarding metric preferences and suggested decision scenarios were overwhelmingly independent decisionmaking.
However, we left out the decision type from our summary because it required more complex reasoning.

\subsection{Human Spot Checks}

We spot checked a handful of papers to make sure that the model was accurately reporting the metrics being used.
We found that reporting of accuracy, AUC-ROC and AUC-PR was good.
Reporting whether Precision and Recall were being used directly as metrics, versus being mentioned in the context of AUC-PR, was sometimes a judgment call.
In one case, arguably Mean Squared Error was being used as a loss rather than an evaluation metric, since overall evaluation was not based on model quality.

\subsection{Conclusion}

This analysis reveals significant differences in how various research communities evaluate binary classifiers.
CHIL's preference for AUC-ROC aligns with healthcare's historical connection to ranking metrics, while ICML and FAccT researchers favor accuracy, reflecting their diminished focus on actual costs.
These findings inform our main paper's recommendations, showing that consequentialist evaluation remains a niche practice.

\end{document}